\author{
 Nil Ayday \\
  School of Computation, Information and Technology\\
Technical University of Munich\\
\texttt{nil.ayday@tum.de}
   \And
Mahalakshmi Sabanayagam\\
 	School of Computation, Information and Technology\\
Technical University of Munich
  \And
 Debarghya Ghoshdastidar \\
  School of Computation, Information and Technology\\
Technical University of Munich
}
\theoremstyle{plain}
\newtheorem{theorem}{Theorem}[section]
\newtheorem{lemma}[theorem]{Lemma}
\newtheorem{corollary}[theorem]{Corollary}
\theoremstyle{definition}
\newtheorem{definition}[theorem]{Definition}
\newtheorem{assumption}[theorem]{Assumption}
\theoremstyle{remark}
\newtheorem{remark}[theorem]{Remark}
\definecolor{hellblue}{RGB}{220, 230, 241}
\newenvironment{mybox}[2]{%
    \begin{tcolorbox}[colback=#1,colframe=#1]
        {\large \textbf{#2}}\\[4pt]
}{%
    \end{tcolorbox}
}
\title{Exact Generalisation Error Exposes Benchmarks Skew Graph Neural Networks Success (or Failure)}
\begin{document}
\maketitle

\begin{abstract}
Graph Neural Networks (GNNs) have become the standard method for learning from networks across fields ranging from biology to social systems, yet a principled understanding of what enables them to extract meaningful representations, or why performance varies drastically between similar models, remains elusive. These questions can be answered through the generalisation error, which measures the discrepancy between a model’s predictions and the true values it is meant to recover. Although several works have derived generalisation error bounds, learning theoretical bounds are typically loose, restricted to a single architecture, and offer limited insight into what governs generalisation in practice. In this work, we take a fundamentally different approach by deriving the \emph{exact generalisation error} for a broad range of linear GNNs, including convolutional, PageRank-based, and attention-based models, through the lens of signal processing. Our exact generalisation error exposes a strong benchmark bias in existing literature: commonly used datasets exhibit high alignment between node features and the graph structure, inherently favouring architectures that rely on it. We further show that the similarity between connected nodes (homophily) decisively governs which architectures are best suited for a given graph, thereby explaining how specific benchmark properties systematically shape the reported performance in the literature. Together, these results explain when and why GNNs can effectively leverage structure and feature information, supporting the reliable application of GNNs.
\end{abstract}

\section{Introduction}
Graph Neural Networks (GNNs) have emerged as a central paradigm, achieving state-of-the-art results on graph-structured data across various tasks ranging from molecular property prediction in biology \citep{buterez2024transfer}, to analysing social interactions in social sciences \citep{DBLP:conf/www/Fan0LHZTY19}, enhancing recommendation systems in data science \citep{wu2022graph}, and understanding traffic flow in transportation networks \citep{DBLP:journals/corr/abs-2401-00713}. 
Despite widespread application across scientific domains and progress in developing a suite of GNNs, the statistical understanding of what enables GNNs to extract meaningful representations and what limits their performance remains largely incomplete.

Most current insights into GNN behaviour come from empirical comparisons on benchmark datasets, often with strong and unexamined inductive biases such as high homophily, where edges predominantly connect similar nodes, and specific feature-structure correlations \citep{bechler-speicher2025position}. 
Recent findings show that model rankings can be highly sensitive to dataset splits and training protocol, and that simple architectures can outperform more sophisticated models when evaluated fairly \citep{schur2018pitfalls}. However, in practice, conclusions about which architecture works best are frequently driven by these benchmarks rather than by a principled understanding of generalisation. For example, Graph Convolutional Network (GCN) \citep{DBLP:conf/iclr/KipfW17} is known to perform well on highly homophilic Cora citation network \citep{10.5555/295240.295725} but struggles on heterophilic graphs such as Squirrel 
\citep{DBLP:journals/compnet/RozemberczkiAS21}
 (Figure \ref{fig:intro}). This observation has led to the prevailing intuition that GCNs excel on homophilic graphs but falter on heterophilic ones \cite{DBLP:conf/nips/ZhuYZHAK20}. 

To illustrate the limitations of benchmark-driven intuition, we design two synthetic experiments, 
presented in Figure \ref{fig:intro}. In the first, we decrease the homophily score of Squirrel, measured as the ratio of edges connecting nodes within class \citep{DBLP:conf/nips/ZhuYZHAK20}, by adding edges between nodes of different classes. Contrary to the prevailing belief that GCNs deteriorate as homophily decreases, GCN achieves \emph{higher} accuracy on the more heterophilic version of Squirrel (Squirrel ($h\downarrow$) in Figure \ref{fig:intro}). But does GCN in fact reliably perform well on homophilic graphs? To verify this, in the second experiment, we push homophily to its extreme by constructing a perfectly homophilic graph with two classes: two disjoint copies of the Cora network, each corresponding to a single class. When the copies shared identical features, GCN performs \emph{only as good as random guess} 
despite a perfect homophily score of 1 (Copied Features in Figure \ref{fig:intro}). 
These counterexamples demonstrate that GNN performance is not determined by homophily alone but also by the interaction between graph structure and node features. If the behaviour of GCN, the simplest and most widely studied GNN, cannot be reliably understood from standard benchmarks, then experiments alone is insufficient, highlighting a need for a theoretical lens.

Theoretical works have attempted to shed light on GNNs from multiple perspectives, including signal processing, and generalisation error bounds. From a signal processing perspective, 
GNNs can be considered as \emph{graph filters} \citep{DBLP:journals/corr/abs-1905-09550} that operate on signals defined over the nodes of a graph.
The graph filter view reveals what kind of filter a GNN can represent. For example, GCNs \citep{DBLP:conf/iclr/KipfW17} are known to behave as low-pass filters \citep{DBLP:journals/corr/abs-1905-09550}, suppressing high-frequency components. However, as shown in Figure \ref{fig:intro} Squirrel ($h\downarrow$), GCNs can still perform well on graphs with strong heterophily, where dissimilarity produces high-frequency signals, challenging the view that their success is solely due to low-pass filtering. 
On the other hand, generalisation error measures a model's predictive performance on data with unknown targets (labels) and deriving it enables us to understand the conditions under which a GNN is expected to succeed or fail. While empirical results can demonstrate that a model works in practice on a particular dataset, they often cannot explain the underlying reasons for good performance and are susceptible to biases in benchmarks. 

Several prior works have upper bounded the generalisation error of GNNs using tools from statistical learning theory, including approaches based on algorithmic stability \citep{DBLP:conf/kdd/VermaZ19,DBLP:journals/ijon/ZhouW21},  Rademacher complexity \citep{DBLP:conf/nips/EsserVG21,DBLP:journals/corr/abs-2102-10234}, and PAC-Bayes theory \citep{DBLP:conf/iclr/LiaoUZ21}. However, these bounds are often restricted to a single architecture, tend to be loose, and do not capture the true generalisation. 
Moreover, deriving bounds for GNNs is challenging due to the dependencies in the graph data, unlike traditional machine learning settings with independent data samples \citep{DBLP:journals/corr/abs-1906-11300}. We therefore ask: 

\emph{Is it possible to derive exact generalisation error for GNNs?
Consequently, can it reveal how the graph data---features, structure, and targets---, and the graph filter contribute to performance? Can theory help to assess whether standard benchmarks reliably capture these influences?}

\begin{figure}
\centering
\scalebox{0.72}{ 
\begin{tikzpicture}[
    box/.style={rectangle, draw, rounded corners, minimum width=3.5cm, minimum height=1.2cm, align=center},
    arrow/.style={-Stealth, thick}
]

\node[box] (cora) at (-1,4) {
    \textbf{Cora (original)}\\
    \#classes = 7\\
    \fcolorbox{black}{gray!20}{\parbox{3cm}{\centering $h = 0.81$\\ Acc = 78.67 $\pm$ 2.3}}
};

\node[align=center] at (-1,0.75) { 
  \textbf{Duplicated Cora}\\[2pt]
  Two disjoint copies of Cora\\[2pt]
  Each copy is assigned a distinct label.\\[4pt] 
  Two feature conditions are considered:\\[2pt]
  1. \textbf{Random features}:\\ Both copies have random features.\\
  2. \textbf{Copied features}:\\ Both copies have node features of Cora.
};

\node[box] (cora_left) at (-3,-2.5) {
    \textbf{Random Features}\\
    \#classes = 2\\
    \fcolorbox{black}{gray!20}{\parbox{3cm}{\centering $h = 1.0$\\ Acc = 95.35 $\pm$ 1.1}}
};

\node[box] (cora_right) at (1,-2.5) {
    \textbf{Copied Features}\\
    \#classes = 2\\
    \fcolorbox{black}{gray!20}{\parbox{3cm}{\centering $h = 1.0$\\ Acc = 49.04 $\pm$ 1.5}}
};

\draw[->] (cora.south west) to[out=200,in=90] (cora_left.north west);
\draw[->] (cora.south east) to[out=-20,in=90] (cora_right.north east);

\node[box] (squirrel) at (5, 4) {
    \textbf{Squirrel (original) }\\
    \#classes = 5\\
    \fcolorbox{black}{gray!20}{\parbox{3cm}{\centering $h = 0.22$\\ Acc = 33.85 $\pm$ 1.7}}
};

\node[box] (squirrel_child) at (5, -2.5) {
    \textbf{Squirrel ($h\downarrow$)}\\
    \#classes = 5\\
    \fcolorbox{black}{gray!20}{\parbox{3cm}{\centering $h = 0.13$\\ Acc = 68.57 $\pm$ 1.5}}
};

\draw[->] (squirrel.south west) -- node[right, align=left] {
    Squirrel modified \\
    with edges between \\
    different-class nodes, \\
    increasing heterophily
} (squirrel_child.north west);

\end{tikzpicture}
}
\caption{
\textbf{GCN performance} on two common benchmark (Cora \citep{10.5555/295240.295725} and Squirrel \citep{DBLP:journals/compnet/RozemberczkiAS21}). Experiments using synthetic variants of the benchmarks demonstrate that \emph{homophily alone does not explain performance} and benchmarks do not fully capture GNN behavior.
\textbf{$h$ = homophily score}   ($h=1.0$ indicates perfect homophily); \textbf{Acc =test accuracy.}}
\label{fig:intro}
\end{figure}

\textbf{Technical Contributions.}
We answer all three questions positively by developing a novel theoretical framework from a signal-processing perspective and deriving the exact generalisation error for GNNs under this framework (Section \ref{Ch:gen_err}). Unlike previous works, which are typically restricted to specific architectures and are often unable to handle attention-based models, our framework provides a unified treatment of a broad range of GNNs including convolutional models (GCN \citep{DBLP:conf/iclr/KipfW17}, GIN \citep{DBLP:conf/iclr/XuHLJ19}, GraphSAGE \citep{DBLP:conf/nips/HamiltonYL17}, ChebNet \citep{DBLP:conf/nips/DefferrardBV16}, CayleyNet \citep{DBLP:journals/tsp/LevieMBB19}, Highpass \citep{DBLP:conf/acml/ZhangL23}, FAGCN \citep{DBLP:conf/aaai/BoWSS21}), PageRank-based models (PPNP \citep{DBLP:conf/iclr/KlicperaBG19}, GPR-GNN \citep{DBLP:conf/iclr/ChienP0M21}), and attention-based models ( graph attention network (GAT) \citep{DBLP:conf/iclr/VelickovicCCRLB18} and Specformer \citep{DBLP:conf/iclr/BoSWL23}). To derive the first exact generalisation error across these models, our theoretical analysis focuses on linear GNNs, while \textbf{allowing non-linearity in the graph filters}.
While this choice may seem restrictive, both empirical and theoretical studies have shown that linear GNNs achieve competitive performance in semi-supervised node classification, thereby validating the practical relevance of our analysis \citep{DBLP:conf/icml/WuSZFYW19, DBLP:journals/tmlr/SabanayagamEG23}.
\vspace{-0.36mm} We also conduct experiments with non-linear GNNs, and the empirical trends match our theory well, indicating that the insights derived from linear models remain relevant.

\textbf{Insights from Theory.}
Section \ref{Ch:insights} uses our exact characterisation of generalisation error to provide insights into the limitations of GNNs and reveals how benchmark design skews the perceived success.
$(i)$ In Section \ref{Ch:mislignment}, we investigate how the alignment between the graph structure and node features influences generalisation. Our analysis shows that when the graph and features are misaligned, GNNs struggle to combine these sources of information: only aligned information contributes to prediction. This explains the surprising behaviour observed on the synthetic Cora dataset in Figure \ref{fig:intro}, where GCNs perform well with random features, which do not carry any information, but fail when features are copied, as the copied features are misaligned with the new graph. But how can we determine misalignment on a real dataset? The \textbf{misalignment score} (Definition \ref{def:Misalignment}), derived from the generalisation error, can be computed for any dataset and model, providing valuable information for \emph{model selection} by revealing the susceptibility of multiplicative architectures, such as GCN, to misalignment. Consequently, the misalignment score serves as a diagnostic tool to evaluate dataset biases toward certain architectures and to validate both existing and newly proposed benchmarks, making this bias measurable within a principled framework. $(ii)$ Section \ref{Ch:hetero} provides an analysis of the \emph{role of homophily}, where we quantify the effect of homophily on generalisation error. This allows us to assess how GNN performance is influenced by the prevalence of homophily in commonly used benchmark datasets, thereby revealing the limitations of different GNNs, including Chebyshev networks, pagerank-based networks, and a GCN. 
Moreover, our analysis exposes a surprising behaviour: the theoretical generalisation error of GCN recovers under extreme heterophily, thereby explaining the observation in Figure \ref{fig:intro}, where adding more heterophilic edges to Squirrel improves performance. $(iii)$ In Section \ref{Ch:GAT}, we study graph attention and discuss how Specformer overcomes a key limitation of GAT by allowing different frequency responses for repeating eigenvalues. To formalise this, we incorporate the underlying principles of Specformer and GAT into our framework (Definition \ref{Def:gat}), providing a suitable proxy for the theoretical analysis of attention-based architectures without relying on their practical implementations. Experiments with the original non-linear architectures support the relevance of this analysis. These theoretical insights explain when GNNs can effectively use both the graph and the features—--and more importantly, when they fail. Critically, our results expose that benchmarks often mask the limitations of GNNs. 

\section{Preliminaries}
\label{Ch:pre}
This section introduces the notation, learning setup, and the GNN models used in our analysis. We also describe the statistical framework and define the generalisation error.

\textbf{Notation.} Matrices are denoted by uppercase letters (\( A \)), while vectors are denoted by lowercase letters (\( a \)) with its entries $a_i$. For a matrix \( A \), \( A^{\dagger} \) denotes its Moore-Penrose pseudoinverse. The identity matrix is \( I \). \( e_i \) is the \( i \)-th standard basis vector. \( \langle \cdot, \cdot \rangle \) denotes the standard scalar product, and \( |\cdot| \) is the cardinality of a set.

\textbf{Graph Data.} Let \(G\) be a graph with \( n \) nodes and an arbitrary number of edges. The \(d\)-dimensional features for all \(n\) nodes are collected in rows of the \(n \times d\) matrix \(Z\). The structure information of the graph is given by the \(n \times n\) adjacency matrix $A$. The symmetric normalized Laplacian is defined as \( L = I - D^{-1/2} A D^{-1/2} \), where \( D \in \mathbb{R}^{n \times n} \) is the diagonal degree matrix and \( I \) is the identity matrix. Through eigendecomposition, we write \(\displaystyle L = U \Lambda U^\top \), where each column of \( U \in \mathbb{R}^{n \times n} \) is an eigenvector of \( L \), \( \Lambda \in \mathbb{R}^{n \times n} \) is the diagonal eigenvalue matrix of \( L \) with $\Lambda_{ii} =: \lambda_i$.

\textbf{Learning Setup.} We consider a semi-supervised learning problem, where the model has access to the Laplacian matrix \( L \in \mathbb{R}^{n \times n} \) and node features \( Z \in \mathbb{R}^{n \times d} \). Let \( V\) denote the set of all nodes, which is partitioned into a training set \( V_{\text{train}} \), with \(|V_{\text{train}}|=n_{\text{train}}\) and a test set \( V_{\text{test}} \). Each node is associated with a target (label) \(y_{i}\) which is only observed for nodes in \(V_{\text{train}}\) and the task is to predict the  \(n-n_{\text{train}}\) 
 targets (labels) for nodes in \( V_{\text{test}} \).
 
\textbf{GNNs as graph filters.} GNNs are a class of deep learning models designed to learn from graph-structured data. A core component of GNNs is the graph convolution operation \citep{DBLP:journals/tnn/WuPCLZY21}. A graph convolution layer transforms the initial node features \(Z \), into a new representation 
\(
 H :=SZ,
\)
where \( S \in \mathbb{R}^{n \times n} \) is the graph convolution that determines how the node features are propagated to neighboring nodes. The output of the GNN is obtained by applying a learnable vector \( \hat{\theta} \in \mathbb{R}^d \) to \(H\) via a linear transformation, resulting in \( H \hat{\theta} \). For GNNs with multiple convolution layers, the graph convolution is applied repeatedly to propagate information across larger neighborhoods. Specifically, for a linear GNN with \( \ell \) layers, the hidden representation becomes
\(
H := S^\ell Z.
\)
The graph filter \(S\) can be defined based on spatial or spectral information of the graph. However, various GNNs, both spectral and spatial, can be expressed in the spectral domain of \(L\) \citep{DBLP:conf/iclr/BalcilarRHGAH21} as 
\[
S = U g(\Lambda) U^\top,
\]
where \(\Lambda = \operatorname{diag}(\lambda_1, \dots, \lambda_n)\) is the eigenvalue matrix of \(L\), and \( g(\Lambda) \) is a spectral function applied to its diagonal entries. \( g(\Lambda) \) determines how different frequency components (corresponding to the eigenvalues of the  graph Laplacian) are weighted during the convolution, and is referred to as the \emph{frequency response}. Thus, this formulation allows the graph convolutions to be represented by their frequency response, where $g(\Lambda)$ depends on the GNN considered. While \cite{DBLP:conf/iclr/BalcilarRHGAH21} derive frequency responses for standard architectures such as GCN, GIN, CayleyNet and ChebNet, we extend this analysis to additional models including PPNP, GPR-GNN, Highpass, FAGCN, GAT and Specformer. Representative examples of \(g(\lambda)\) for selected GNNs are presented in Table~\ref{tab:gnn_filters2}, with further derivations—--both from prior work and our own contributions--—provided in Appendix \ref{Ap:frequency_response}.
\begin{table}[]
\centering
\caption{Frequency responses \( g(\lambda_{i})\) of selected GNNs.}
\begin{tabular}{@{}l l@{}}
\toprule
\textbf{GNN} & \textbf{Frequency response (\( \bar{p} \) = average degree)} \\
\midrule
GCN & \( g(\lambda_{i}) = 2\left(1 - \frac{\lambda_{i}}{2}\right) \) \\
Highpass & \( g(\lambda_{i}) = \lambda_{i} \) \\
PPNP & \( g(\lambda_{i}) \approx \alpha \left(1 - (1 - \alpha)\left(1 - \frac{\lambda_{i} \bar{p}}{\bar{p} + 1}\right)\right)^{-1} \) \\
GPR-GNN & \( g_{\gamma,K}(\lambda_{i}) \approx \sum_{k=0}^{K} \gamma_k \left(1 - \frac{\lambda_{i} \bar{p}}{\bar{p} + 1} \right)^k \) \\
\bottomrule
\end{tabular}
\label{tab:gnn_filters2}
\end{table}

\textbf{Statistical Framework.} We consider a probabilistic model for the data, with a fixed design setting \citep{bach2024learning}, where a deterministic unobserved \( X \in \mathbb{R}^{n \times d} \) denote the latent features of data. The true target is linear in \( X \) with additive noise, as given by:
\[
y = X\theta^{\star} + \epsilon, \hspace{2mm} \mathbb{E}[\epsilon]=0   \hspace{2mm}, \mathbb{E}[\epsilon \epsilon^T] = \sigma^2 I
\]
where \( \theta^{\star} \in \mathbb{R}^d \) is the optimal parameter vector with a random prior, the noise term \( \epsilon \) has mean zero and isotropic variance. 
We assume that the observed node features \( Z \) and the latent features \(X\) lie in the eigenspace of \(L\), given by:
\begin{equation}
\frac{1}{\sqrt{n}}X = U \, {\Lambda^\star}^{1/2} \Phi^\top, \quad \frac{1}{\sqrt{n}} Z = U \, \Lambda_f^{1/2} \Phi^\top
\label{eq:XZ}
\end{equation}
where \( U \in \mathbb{R}^{n \times n}\)  corresponds to the eigenvectors of \( L \), respectively and  \( \Phi^\top \in \mathbb{R}^{n \times d} \) is an orthogonal matrix  such that \(
\Phi^\top \Phi =
\begin{bmatrix}
I_d & 0 \\
0 & 0
\end{bmatrix}_{n\times n},\) where \(I_d\) is the \(d \times d\) identity matrix and the remaining entries are zero. Because of the choice of $\Phi$, the feature covariance matrices have $d$ nonzero eigenvalues, which remain constant in $n$ under the $1/\sqrt{n}$ scaling. 
This framework allows quantifying the impact of graph and feature information on the performance (Section \ref{Ch:mislignment}) and the modeling of different kind of data (Section \ref{Ch:hetero}). Within this setup, we can express the node representations  \( H \) of a linear GNN with \( \ell\) layers:
\begin{equation}
\frac{1}{\sqrt{n}}H=  U g(\Lambda)^{ \ell}{\Lambda_f}^{1/2} \Phi^T=:U \tilde{\Lambda}^{\frac{1}{2}} \Phi^T,
\label{Eq:H}
\end{equation}
where we define \(\tilde{\Lambda} := \operatorname{diag}(\tilde{\lambda}_1, \dots, \tilde{\lambda}_n)\), with \(\tilde{\lambda}_i := g(\lambda_i)^{2\ell} ({\Lambda_f})_i\). We study estimator \(\hat{\theta}\) of the form
\begin{align*}
&\hat{\theta} = \underset{\theta \in \mathbb{R}^d}{\text{argmin}} \sum_{i \in V_{\text{train}}} \left( (H\theta)_i - y_i \right)^2 + \sigma^2 \|\theta\|^2 \\
&= \left(H_{\text{train}}^\top H_{\text{train}} + \sigma^2 I\right)^{\dagger} H_{\text{train}}^\top \mathbf{y},
\end{align*}
where \( H_{\text{train}} \in \mathbb{R}^{n \times d} \) is the representation of the training nodes, on which \(\hat{\theta}\) is fitted, as the optimally tuned parameter corresponding to the minimum-norm solution.

\textbf{Generalisation Error.}
The generalisation error quantifies the performance of the model by comparing the predicted outputs with the true targets. The generalisation error of a GNN with \(H\) in the above fixed design setting, where the data \(S, Z, X\) are deterministic, is given by:

\begin{equation}
 R_{H} = \mathbb{E}_{V,\epsilon,\theta^\star}\left[ \frac{1}{n} \sum_{i \in V} \left((H \hat{\theta})_{i} -  (X \theta^\star)_{i} \right)^2 \right],
\label{def:gen_err}
\end{equation}
 where expectation is over the uniformly
random test–train split \(V=V_{\text{train}}\cup V_{\text{test}}\), optimal parameter \(\theta^\star\) and noise \( \epsilon\).

\section{Generalisation Error in Fixed Design}
\label{Ch:gen_err}
Under our statistical framework introduced in Section~\ref{Ch:pre}, where node features and GNN representations are spectral functions of \(L\), the generalisation error can be \emph{exactly} characterised by the interaction between the spectrums of the node representation \(\tilde{\Lambda}\) (Equation~\ref{Eq:H}) and the underlying data matrix \(\Lambda^\star\) (Equation~\ref{eq:XZ}). We assume the following.

\begin{assumption}[\textbf{Population Covariance Matrix}]
We assume that the empirical covariance of the full dataset matches that of the training set,
\(
\frac{1}{n_{\text{train}}} X_{\text{train}}^\top X_{\text{train}} =\frac{1}{n} X^\top X.
\)
 \label{As:general}
\end{assumption}
Assumption \ref{As:general} simplifies the analysis by ensuring the training set covariance matches that of the full dataset, allowing an exact characterisation of the generalisation error. In Appendix \ref{Ap:covariance}, we motivate this assumption by showing that it holds in expectation and provide bounds on the deviation of generalisation error due to this assumption, which vanishes in the large
$n$ limit with high probability.
\begin{assumption}[\textbf{Parameter Prior}]
Let \( \{\phi_i\}_{1,..,n} \) be the columns of \( \Phi \) from Equation~\ref{eq:XZ}. We consider that the optimal parameter \( \theta^\star \) satisfies a prior such that, for \( i \neq j \),  
    \(\displaystyle
    \mathbb{E} [\langle \phi_i, \theta^\star \rangle \langle \phi_j, \theta^\star \rangle] = 0.
    \)
\label{As:aniso_prior}
\end{assumption}
Assumption \ref{As:aniso_prior} is mild given orthogonal $\Phi$ and is common in regression analysis \citep{DBLP:journals/focm/CaponnettoV07,DBLP:journals/corr/abs-2406-08466}.
Under these assumptions, the generalisation error can be expressed in terms of \( \tilde{\Lambda} \), \( \Lambda^{\star} \), and the prior structure of \( \theta^\star \), and the abbreviation \( c := \tfrac{ \sigma^2}{n_{\text{train}}} \), leading to the following theorem, proved in Appendix \ref{Ap:theo_proof}.
\begin{theorem}[\textbf{Generalisation Error of GNNs}]
Under the framework in Section \ref{Ch:pre} and Assumptions \ref{As:general} and \ref{As:aniso_prior}, the generalisation error $R_{H}$, as defined in Eq.~\eqref{def:gen_err}, is given by:
    \begin{align*}
        R_{H} 
        &= \sum_{i = 1}^{d} \left( \frac{\tilde{\lambda}_{i} c}{\tilde{\lambda}_{i} + c} - \left(\tilde{\lambda}_{i} - \lambda^{\star}_{i}\mathbb{E} \langle \phi_i, \theta^\star \rangle^2\right) \frac{c^2}{\left(\tilde{\lambda}_{i} + c\right)^2} \right),
    \end{align*}
    where \(\tilde{\lambda}_{i}=g(\lambda_i)^{2\ell} (\Lambda_f)_i\) (Eq.~\ref{Eq:H}), and \(\lambda^{\star}_{i}=(\Lambda^\star)_i\) (Eq.~\ref{eq:XZ}), with \(g(\Lambda)\), \(\Lambda_f\), and \(\Lambda^\star\) governing the influence of the graph filter, the observed features, and the latent features.
    \label{thm:aniso}
\end{theorem}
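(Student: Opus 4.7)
The plan is to reduce the generalisation error to a one-dimensional calculation along each eigen-direction of $L$ (equivalently, along each basis vector $\phi_{i}$), then sum. I will proceed in four steps.

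First, I substitute $y_{\text{train}}=X_{\text{train}}\theta^{\star}+\epsilon_{\text{train}}$ into the ridge estimator to write $\hat\theta$ as a sum of a signal part and a noise part. Using Assumption~\ref{As:general} together with the common spectral factorisation $X=U\Lambda^{\star 1/2}\Phi^{\top}$, $H=U\tilde\Lambda^{1/2}\Phi^{\top}$, one has $H_{\text{train}}^{\top}H_{\text{train}}=\frac{n_{\text{train}}}{n}\Phi\tilde\Lambda\Phi^{\top}$ and $H_{\text{train}}^{\top}X_{\text{train}}=\frac{n_{\text{train}}}{n}\Phi\tilde\Lambda^{1/2}\Lambda^{\star 1/2}\Phi^{\top}$, because both quantities involve $U^{\top}P_{\text{train}}^{\top}P_{\text{train}}U$ in the middle, which Assumption~\ref{As:general} forces to behave like $\tfrac{n_{\text{train}}}{n}I$ on the relevant subspace. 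Pulling the factor $\tfrac{n_{\text{train}}}{n}$ out of the inverse turns the regulariser $\sigma^{2}I$ into $cI$ with $c=n\sigma^{2}/n_{\text{train}}$, so
\[
\hat\theta=(\Phi\tilde\Lambda\Phi^{\top}+cI)^{-1}\bigl(\Phi\tilde\Lambda^{1/2}\Lambda^{\star 1/2}\Phi^{\top}\theta^{\star}+\tfrac{n}{n_{\text{train}}}H_{\text{train}}^{\top}\epsilon\bigr).
\]

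Second, I exploit that $\{\phi_{1},\dots,\phi_{d}\}$ form an orthonormal basis of $\mathbb{R}^{d}$ (from $\Phi\Phi^{\top}=I_{d}$) and are eigenvectors of $\Phi\tilde\Lambda\Phi^{\top}$ with eigenvalues $\tilde\lambda_{i}$: indeed $\Phi^{\top}\phi_{i}=e_{i}\in\mathbb{R}^{n}$ and hence $\Phi\tilde\Lambda\Phi^{\top}\phi_{i}=\tilde\lambda_{i}\phi_{i}$. Consequently, $\phi_{i}^{\top}(\Phi\tilde\Lambda\Phi^{\top}+cI)^{-1}=\frac{1}{\tilde\lambda_{i}+c}\phi_{i}^{\top}$, which makes every scalar coefficient $\phi_{i}^{\top}\hat\theta$ completely explicit.

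Third, I use orthogonality of $U$ to rewrite $\sum_{i\in V}((H\hat\theta)_{i}-(X\theta^{\star})_{i})^{2}=\|\tilde\Lambda^{1/2}\Phi^{\top}\hat\theta-\Lambda^{\star 1/2}\Phi^{\top}\theta^{\star}\|^{2}$, and since $\phi_{i}=0$ for $i>d$, the sum collapses to $i=1,\dots,d$. Per coordinate, the signal contribution is $-\frac{c\,\lambda_{i}^{\star 1/2}}{\tilde\lambda_{i}+c}\langle\phi_{i},\theta^{\star}\rangle$ and the noise contribution is $\frac{n}{n_{\text{train}}}\frac{\tilde\lambda_{i}^{1/2}}{\tilde\lambda_{i}+c}\,\phi_{i}^{\top}H_{\text{train}}^{\top}\epsilon$. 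Squaring, the cross term vanishes because $\epsilon\perp\theta^{\star}$ with $\mathbb E[\epsilon]=0$; Assumption~\ref{As:aniso_prior} in turn kills the off-diagonal terms $\mathbb E[\langle\phi_{i},\theta^{\star}\rangle\langle\phi_{j},\theta^{\star}\rangle]$, so only the diagonal $i=j$ survives.

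Fourth, I compute the two surviving variances. The signal variance at index $i$ is $\frac{c^{2}\lambda_{i}^{\star}\mathbb E\langle\phi_{i},\theta^{\star}\rangle^{2}}{(\tilde\lambda_{i}+c)^{2}}$. For the noise variance, the same spectral identity gives $\mathbb E[\phi_{i}^{\top}H_{\text{train}}^{\top}\epsilon\,\epsilon^{\top}H_{\text{train}}\phi_{j}]=\sigma^{2}\,\tfrac{n_{\text{train}}}{n}\tilde\lambda_{i}\delta_{ij}$, which after substituting $c=n\sigma^{2}/n_{\text{train}}$ collapses to $\frac{c\tilde\lambda_{i}^{2}}{(\tilde\lambda_{i}+c)^{2}}$. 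Summing the two contributions and rewriting $\frac{c\tilde\lambda_{i}^{2}}{(\tilde\lambda_{i}+c)^{2}}=\frac{\tilde\lambda_{i}c}{\tilde\lambda_{i}+c}-\frac{\tilde\lambda_{i}c^{2}}{(\tilde\lambda_{i}+c)^{2}}$ yields exactly the form claimed in the theorem.

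The main obstacle is the rigorous extension of Assumption~\ref{As:general} from $X$ to $H$ (and to the cross product $H_{\text{train}}^{\top}X_{\text{train}}$): the assumption is stated for the latent feature covariance, but the argument needs it to propagate to any spectral transformation of $L$, which reduces to an isotropy condition $U^{\top}P_{\text{train}}^{\top}P_{\text{train}}U\asymp \tfrac{n_{\text{train}}}{n}I$ on the eigenbasis. I expect that this is where the appendix's concentration argument for sample covariances (mentioned after Assumption~\ref{As:general}) is invoked; once in place, the remainder of the proof is the block-diagonalisation and bias–variance bookkeeping sketched above.
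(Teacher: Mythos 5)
Your proposal is correct and follows essentially the same route as the paper's proof in Appendix~\ref{Ap:theo_proof}: substitute $y=X\theta^\star+\epsilon$ into the ridge estimator, invoke Assumption~\ref{As:general} (extended to $H^\top H$ and $H^\top X$ via the covariance argument of Appendix~\ref{Ap:covariance}) to replace training covariances by $\tfrac{n_{\text{train}}}{n}$-scaled full covariances, and perform the same bias--variance bookkeeping in the shared eigenbasis. The only difference is presentational — you diagonalise the primal $d\times d$ matrix $\Phi\tilde\Lambda\Phi^\top$ coordinate-wise in the $\phi_i$ basis, whereas the paper passes to the dual form $(HH^\top+cI)^{\dagger}$ and uses trace identities — and you correctly flag the propagation of Assumption~\ref{As:general} to $H$ as the step requiring the appendix's justification.
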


\section{Insights via Generalisation Error: \\Under What Conditions Do GNNs Fail?}
\label{Ch:insights}
Using the generalisation error of GNNs in Theorem \ref{thm:aniso}, we derive several new insights in this section. $(i)$ \emph{Failure due to feature-graph-target misalignment:} Most GNNs rely on multiplicative graph convolutions \citep{DBLP:journals/tnn/WuPCLZY21} which has been identified as the cause of oversmoothing as the networks get deeper \citep{DBLP:conf/nips/Keriven22}. While this offers explanation of the failure of deeper GNNs, we theoretically show even one-layer GNNs can fail when there is misalignment between graph, feature and target in Section \ref{Ch:mislignment}. 
$(ii)$ \emph{Sensitivity to heterophilic data:} 
Most popular models, such as GCN, have been reported to degrade significantly on heterophilic graphs \citep{DBLP:conf/nips/ZhuYZHAK20}.
We theoretically quantify the sensitivity of several GNNs to heterophily in Section \ref{Ch:hetero} by connecting it to the spectral properties of the GNN filters. 
$(iii)$ \emph{Limitations of attention-based models:} Finally, we investigate attention-based models (GAT, Specformer) by considering their proxies with trainable convolution in Section \ref{Ch:GAT}, and show that GAT, while offering adaptive frequency response, still suffer from limitations when eigenvalues of the Laplacian are repeated. Implementation details of the experiments are provided in Appendix \ref{Ap:Experiments}.

For the analysis in Sections \ref{Ch:mislignment} and \ref{Ch:hetero}, we simplify Assumption \ref{As:aniso_prior} to isotropic prior, i.e., the optimal $\theta^\star$ is isotropic. We note that this is only for ease of exposition.

\begin{corollary}[\textbf{Generalisation Error with Isotropic Parameter Prior}]
\label{gen-err}
Under the framework in Section \ref{Ch:pre} and Assumptions \ref{As:general} and isotropic parameter prior on the optimal parameter vector, i.e., \( \mathbb{E}[\theta^{\star} \theta^{\star\top}] = I \), the generalisation error of a GNN with representation \(H\) is given by:
\[
R_H = 
\sum_{i:\,\tilde{\lambda}_i=0} \lambda_i^\star
+ \sum_{i:\,\tilde{\lambda}_i>0} \left[
   \frac{c\tilde{\lambda}_i}{\tilde{\lambda}_i + c}
   - (\tilde{\lambda}_i - \lambda_i^\star)
     \frac{c^2}{(\tilde{\lambda}_i + c)^2}
\right].
\]
 with the minimum of \(R_{H}\) achieved when \(\tilde{\lambda}_i = \lambda^\star_i\) for all \(i\).
\end{corollary}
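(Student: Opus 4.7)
The plan is to specialise Theorem~\ref{thm:aniso} via the isotropic assumption and then analyse the per-coordinate minimum. First I would evaluate $\mathbb{E}\langle\phi_i,\theta^\star\rangle^2$ under $\mathbb{E}[\theta^\star\theta^{\star\top}] = I$: this equals $\phi_i^\top\phi_i = \|\phi_i\|^2$. The block structure $\Phi^\top\Phi = \mathrm{diag}(I_d,0)$ declared in Section~\ref{Ch:pre} forces $\|\phi_i\|^2 = 1$ for $i = 1,\dots,d$ (and $\phi_i = 0$ otherwise), so the factor $\mathbb{E}\langle\phi_i,\theta^\star\rangle^2$ appearing in Theorem~\ref{thm:aniso} is identically $1$ over the summation range and can be dropped.

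Next I would split the resulting sum according to the sign of $\tilde{\lambda}_i$. For indices with $\tilde{\lambda}_i = 0$, the first fraction $\tilde{\lambda}_i c/(\tilde{\lambda}_i+c)$ vanishes, and the second collapses to $-(-\lambda^\star_i)\cdot c^2/c^2 = \lambda^\star_i$; gathering these gives the first sum $\sum_{i:\tilde{\lambda}_i=0}\lambda^\star_i$. For indices with $\tilde{\lambda}_i > 0$, the summand is left in its original form, producing the second sum and matching the displayed expression.

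For the optimality claim, the total error decouples across $i$, so it suffices to minimise each summand in $\tilde{\lambda}_i \ge 0$ with $\lambda^\star_i$ fixed. The key algebraic simplification is to combine the two fractions over the common denominator $(\tilde{\lambda}_i+c)^2$, which yields the compact form $c(\tilde{\lambda}_i^{\,2} + \lambda^\star_i c)/(\tilde{\lambda}_i+c)^2$; a quotient-rule derivative produces $2c(\tilde{\lambda}_i - \lambda^\star_i)/(\tilde{\lambda}_i+c)^3$, so the unique interior stationary point is $\tilde{\lambda}_i = \lambda^\star_i$, attaining the value $c\lambda^\star_i/(\lambda^\star_i+c)$. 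This lies strictly below the boundary value $\lambda^\star_i$ at $\tilde{\lambda}_i=0$, so it is the global minimiser on $[0,\infty)$; summing over $i$ gives the claim. The one subtle bookkeeping point is the treatment of the degenerate $\tilde{\lambda}_i = 0$ directions, which must be handled separately to avoid a $0/0$ limit in the generic expression; once that is done, the remainder is a direct specialisation of Theorem~\ref{thm:aniso} together with elementary calculus.
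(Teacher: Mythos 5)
Your proposal is correct and follows essentially the same route as the paper: specialise Theorem~\ref{thm:aniso} by noting that the isotropic prior together with the block-orthogonality of $\Phi$ forces $\mathbb{E}\langle\phi_i,\theta^\star\rangle^2=1$, split off the $\tilde{\lambda}_i=0$ directions, and minimise each summand by a derivative computation whose unique stationary point is $\tilde{\lambda}_i=\lambda^\star_i$. The only slip is the constant in your derivative, which should be $2c^2(\tilde{\lambda}_i-\lambda^\star_i)/(\tilde{\lambda}_i+c)^3$ rather than $2c(\tilde{\lambda}_i-\lambda^\star_i)/(\tilde{\lambda}_i+c)^3$; this does not affect the location or globality of the minimiser.
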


Since the generalisation error in Corollary~\ref{gen-err} is minimized when $\tilde{\lambda}_i = \lambda^\star_i$, the first part of the second term,
\(\sum_{\substack{i: \tilde{\lambda}_{i}>0}} \frac{\tilde{\lambda}_{i} c}{\tilde{\lambda}_{i} + c},
\) is the irreducible part of the error, while the second part,
\(
\sum_{\substack{i: \tilde{\lambda}_{i}>0}} \left( \lambda_i^\star-\tilde{\lambda}_{i}\right) \frac{c^2}{(\tilde{\lambda}_{i} + c)^2},
\)
is small if \( \tilde{\lambda}_i \) is close to \( \lambda^\star_i \).
Consequently, the success of a GNN depends on the interaction between the graph filter (determined by the GNN), the graph, and node features. In the following sections, we examine the conditions under which GNNs fail to understand their working mechanisms and limitations.

\subsection{Misalignment: When Naive Concatenation Outperforms Convolution}
\label{Ch:mislignment}
The idea behind GNNs is to exchange feature information between connected nodes to use their dependencies \citep{khemani2024review}. However, our analysis suggests that GNNs cannot always fully take advantage of graph information and node features, when the adjacency matrix (\( A \)), node features (\( Z \)), and the target (\( y \)) do not align. To quantify the misalignment, we introduce the following definition:

\begin{definition}[\textbf{Misalignment of GNN}]
For any node representation \( H \) produced by a GNN and underlying data matrix \( X \),  
the misalignment score is defined as:  \(
d(H, X) =  \frac{1}{n} \operatorname{Tr} \big( (I - P_H)X X^T \big),\)
where \( P_H = H(H^TH)^{\dagger}H^T\) is the projection matrix onto the column space of \( H \).
\label{def:Misalignment}
\end{definition}

The misalignment score measures the extent to which the target space $X$ is not represented in the GNN embedding $H$.  
When the subspaces of \( H \) and \( X\) coincide, the misalignment is zero, indicating perfect alignment. Misalignment score reaches its maximum when the subspaces are orthogonal.

\begin{lemma}[\textbf{Misalignment as a Component of Generalisation Error}]
\label{lemma:misalignment}
Under the framework in Section \ref{Ch:pre}, where \( X \) and the node representation \( H \) lie in the same subspace, the misalignment defined in Definition~\ref{def:Misalignment} equals \(
d(H, X) =  \sum_{\substack{i: \hspace{0.5mm}\tilde{\lambda}_i = 0}} \lambda_i^\star
\), the first term in the generalisation error expression in Corollary~\ref{gen-err}.
\end{lemma}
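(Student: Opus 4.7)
My plan is to compute $d_{\text{alignment}}(H, X) = \operatorname{Tr}\!\bigl((I - P_H)\, X X^\top\bigr)$ directly by plugging in the spectral representations of $X$ and $H$ from Equations~\eqref{eq:XZ} and~\eqref{Eq:H}. The key observation is that both $X$ and $H$ share the eigenbasis $U$ of the Laplacian and the same outer factor $\Phi^\top$, so the computation reduces to bookkeeping over the eigencoordinates $\lambda_i^\star$ and $\tilde\lambda_i$.

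First I would use the structural identity $\Phi^\top \Phi = \operatorname{diag}(I_d, 0)$ to expand
\[
X X^\top = U\, h^\star(\Lambda)^{1/2}\, \Phi^\top \Phi\, h^\star(\Lambda)^{1/2}\, U^\top = \sum_{i=1}^{d} \lambda_i^\star\, u_i u_i^\top,
\]
where $u_i$ is the $i$-th column of $U$, and analogously $HH^\top = \sum_{i=1}^{d} \tilde\lambda_i\, u_i u_i^\top$. These identities make precise what the "same subspace" hypothesis asserts: both $X$ and $H$ have columns in $\operatorname{span}(u_1,\dots,u_d)$, the image of the factor $U\Phi^\top$.

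Next I would compute $P_H = H(H^\top H)^\dagger H^\top$. Since $H^\top H = \Phi\, \tilde\Lambda\, \Phi^\top = \operatorname{diag}(\tilde\lambda_1, \dots, \tilde\lambda_d)$ is already diagonal, its Moore–Penrose pseudoinverse inverts only the nonzero entries, so substitution yields
\[
P_H \;=\; \sum_{i \le d\,:\, \tilde\lambda_i > 0} u_i u_i^\top.
\]
Therefore $(I - P_H)\, u_i u_i^\top$ equals $u_i u_i^\top$ when $i \le d$ and $\tilde\lambda_i = 0$, and vanishes when $\tilde\lambda_i > 0$. Combining this with the expansion of $XX^\top$ gives
\[
(I - P_H)\, X X^\top \;=\; \sum_{i \le d\,:\, \tilde\lambda_i = 0} \lambda_i^\star\, u_i u_i^\top,
\]
and taking the trace (with $\|u_i\|^2 = 1$) produces the claimed identity $d_{\text{alignment}}(H, X) = \sum_{i:\, \tilde\lambda_i = 0} \lambda_i^\star$, which is exactly the first summand of the generalisation error in Corollary~\ref{gen-err}.

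The only real obstacle is a bookkeeping one: handling the pseudoinverse cleanly when some $\tilde\lambda_i$ vanish, and using the "same subspace" hypothesis to restrict attention to the $d$-dimensional coordinate block induced by $\Phi$ so that the sum matches the one in Corollary~\ref{gen-err}. Beyond these clarifications, the argument is a direct computation of two projections in a common eigenbasis.
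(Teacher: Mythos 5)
Your computation is correct and is essentially the argument the paper intends: the paper states this lemma without an explicit proof (only the observation that $X$ and $H$ lie in the same subspace), and your direct expansion of $XX^\top = \sum_{i\le d}\lambda_i^\star u_iu_i^\top$, $HH^\top=\sum_{i\le d}\tilde\lambda_i u_iu_i^\top$ and $P_H$ in the common eigenbasis $U$ supplies exactly the missing details, including the correct handling of the pseudoinverse on the zero eigenvalues and the restriction to the first $d$ coordinates enforced by $\Phi^\top\Phi=\operatorname{diag}(I_d,0)$. The only cosmetic slip is that $H^\top H=\Phi\tilde\Lambda\Phi^\top$ is not literally $\operatorname{diag}(\tilde\lambda_1,\dots,\tilde\lambda_d)$ but is diagonalised by the orthonormal columns $\{\phi_i\}_{i\le d}$ of $\Phi$; since the pseudoinverse still inverts only the nonzero $\tilde\lambda_i$, your conclusion $P_H=\sum_{i\le d:\,\tilde\lambda_i>0}u_iu_i^\top$ and the final identity are unaffected.
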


To understand how misalignment influences generalisation error, we consider a GCN with the symmetric normalised adjacency matrix as the convolution matrix. The
subspace spanned by \( H \) becomes smaller when the graph \( A \) and the features \( Z \) are not aligned \citep{klepper2023relatinggraphautoencoderslinear} due to the multiplicative nature of the convolution operation. The features or graph can even hurt generalisation if they are not aligned by increasing $R_H$. An example of this is already presented in Figure 1. On duplicated Cora with copied features: despite perfect homophily, GCN fails completely, highlighting that structure-feature alignment can dominate $R_H$. In such scenarios, where the graph and feature matrix are misaligned, even a simple concatenation of the adjacency and feature matrices \citep{chen2022demystifyinggraphconvolutionsimple} can outperform the convolution operation. 
\begin{wrapfigure}{r}{0.48\columnwidth}
    \centering
    \includegraphics[width=\linewidth]{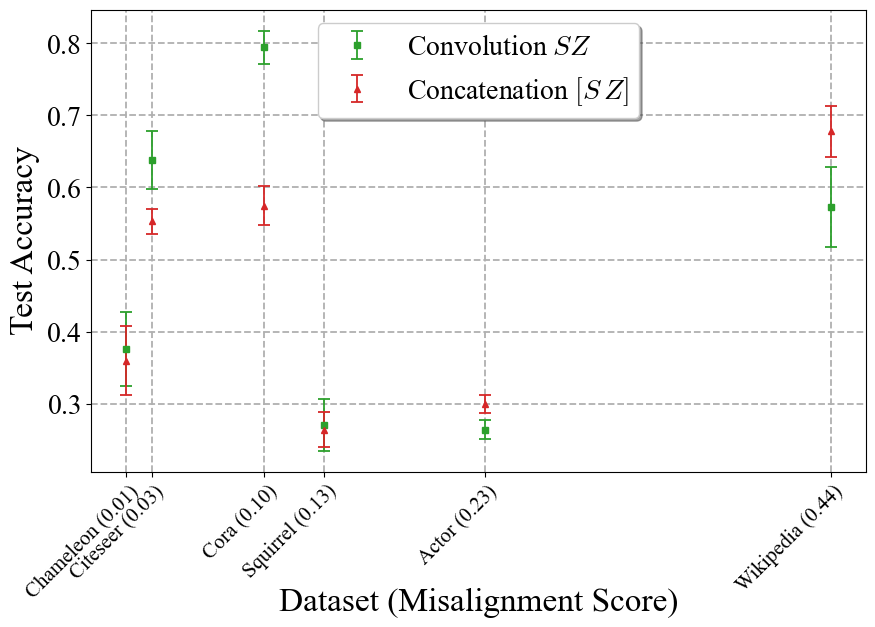}
    \caption{\textbf{Misalignment vs. Accuracy.} Test accuracy of convolution (\( {S}{Z} \)) and concatenation (\( [S \ Z] \)) models. The x-axis shows the misalignment score as defined in Definition \ref{def:Misalignment} for \( H = SZ \).}
    \label{fig:dataset_comparison}
    \vspace{-20pt} 
\end{wrapfigure}
Figure \ref{fig:dataset_comparison} shows the test accuracy of convolution (GCN denoted as \( SZ \)) and concatenation (\( [S \ Z] \)) across six real datasets---Cora \citep{10.5555/295240.295725}, Citeseer \citep{DBLP:conf/dl/GilesBL98}, Wikipedia (Wikipedia II from \cite{DBLP:journals/tnn/QianERPB22}), Squirrel \citep{DBLP:journals/compnet/RozemberczkiAS21}, Chameleon \citep{DBLP:journals/compnet/RozemberczkiAS21} and Actor \citep{DBLP:conf/kdd/TangSWY09}--- plotted against the misalignment score. 
We compute a normalised  misalignment score
\(\displaystyle
\frac{\operatorname{Tr}\left( (I - P_H) XX^\top \right)}{\operatorname{Tr}(XX^\top)}
\), assuming \(X\) is the one-hot encoded label matrix. Figure \ref{fig:dataset_comparison} shows that convolution performs poorly on a dataset with high misalignment, such as Wikipedia, whereas the concatenation can use the graph information better. In particular, Wikipedia is an example where node features carry most of the signal, and concatenation can utilize this even under strong misalignment. Actor has intermediate misalignment, where concatenation offers only a small benefit. On the other hand, benchmark datasets like Cora and Citeseer have low misalignment, and convolution performs well, which explains their reported success in GCN evaluations. These results shows that we should have a critical look at the benchmark datasets when we are evaluating the performance of GNNs. Most of these datasets have low misalignment and favor graph convolution, which limits our ability to assess their weaknesses. The misalignment score, stemming from the generalisation error, allows us to put these datasets under the microscope.  
Interestingly, the misalignment score is not necessarily high for heterophilic graphs such as Squirrel and Chameleon, which suggests that misalignment cannot fully capture the challenges posed by heterophilic structures. The following section investigates the complete error term $R_H$ to better understand how heterophily affects the model performance.

\subsection{Heterophilic Data: Why Some GNNs Fail}
\label{Ch:hetero}
Our framework introduced in Section~\ref{Ch:pre} allows us to model homophilic and heterophilic graphs by choosing the spectral function on the underlying data matrix $(\Lambda^{\star})$. In homophilic graphs, the labels change smoothly, creating a low-pass signal \citep{DBLP:journals/pieee/OrtegaFKMV18}, concentrated on the low-frequency eigenvectors of the Laplacian. Whereas, in heterophilic graphs, the labels change roughly across the edges, forming a high-frequency signal \citep{DBLP:conf/acml/ZhangL23}. That is, a homophilic graph has a low-pass spectrum while a heterophilic graph has a high-pass spectrum w.r.t the Laplacian. 

To formalize this distinction, we introduce the homophily parameter \( q \in [0, 1] \) 
where \( q = 0 \) corresponds to a perfectly heterophilic graph and \( q = 1 \) to a perfectly homophilic graph. Specifically, we consider the spectral function on the underlying data matrix $X$ and on the features $Z$: 
\begin{equation}
   \lambda^\star_{i} = (\lambda_f)_{i} = q - \frac{(2q - 1) \lambda_{i}}{2},  
    \label{eq:f_lambda}
\end{equation}
where \( \lambda_i \in [0,2] \), as these are the eigenvalues of the symmetrically normalized Laplacian. Consistently with the construction in Eq. \eqref{eq:XZ}, the \(n-d\) eigenvalues mapped to zero are those for which \(\lambda_i = 1\) i.e., neither a low- nor a high-frequency.
In the following, we investigate how the graph filter influences generalisation across varying homophily.
\begin{wrapfigure}[29]{r}{0.5\columnwidth}
    \centering
    \begin{subfigure}[t]{\linewidth}
        \centering
        \includegraphics[width=\linewidth]{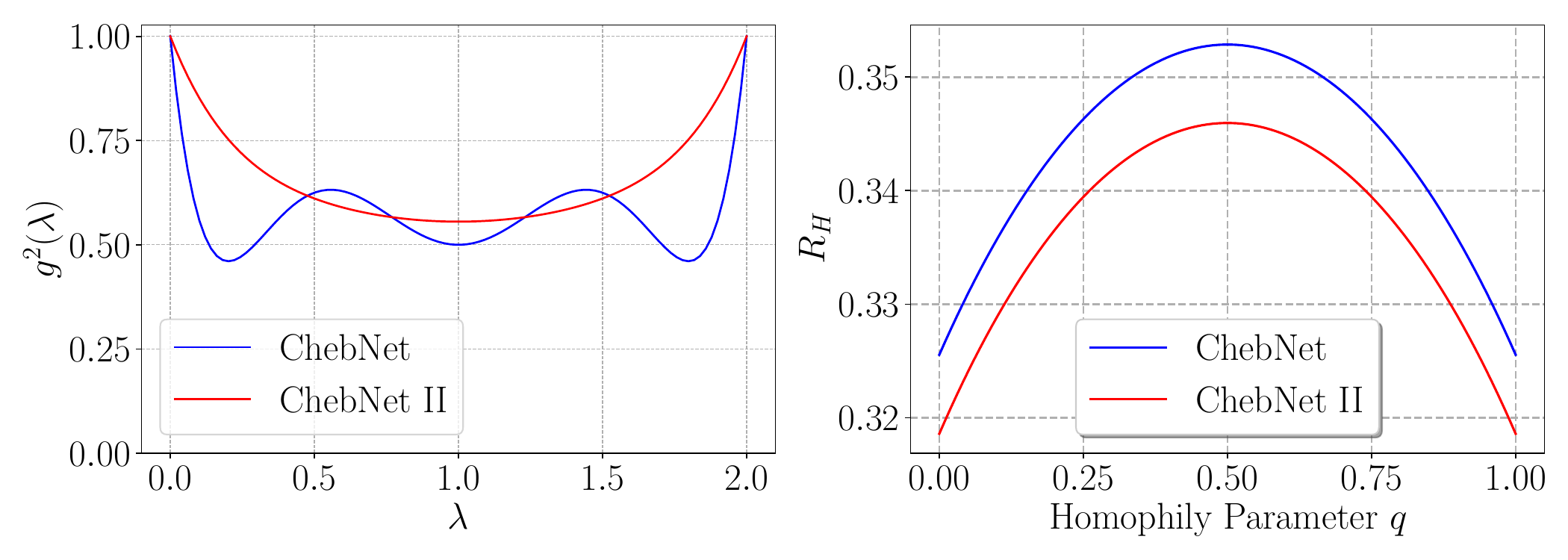}
        \caption{\textbf{Two variants of Chebyshev networks.} 
        \textbf{Left:} Frequency response showing the improvement of ChebNet II over ChebNet. 
        \textbf{Right:} $R_H$, averaged over number of eigenvalues, for ChebNet and ChebNet II over homophily parameter $q$.}
        \label{fig:chebnet_comparison}
    \end{subfigure}
    
    \vspace{0.5cm}

    \begin{subfigure}[t]{\linewidth}
        \centering
        \includegraphics[width=\linewidth]{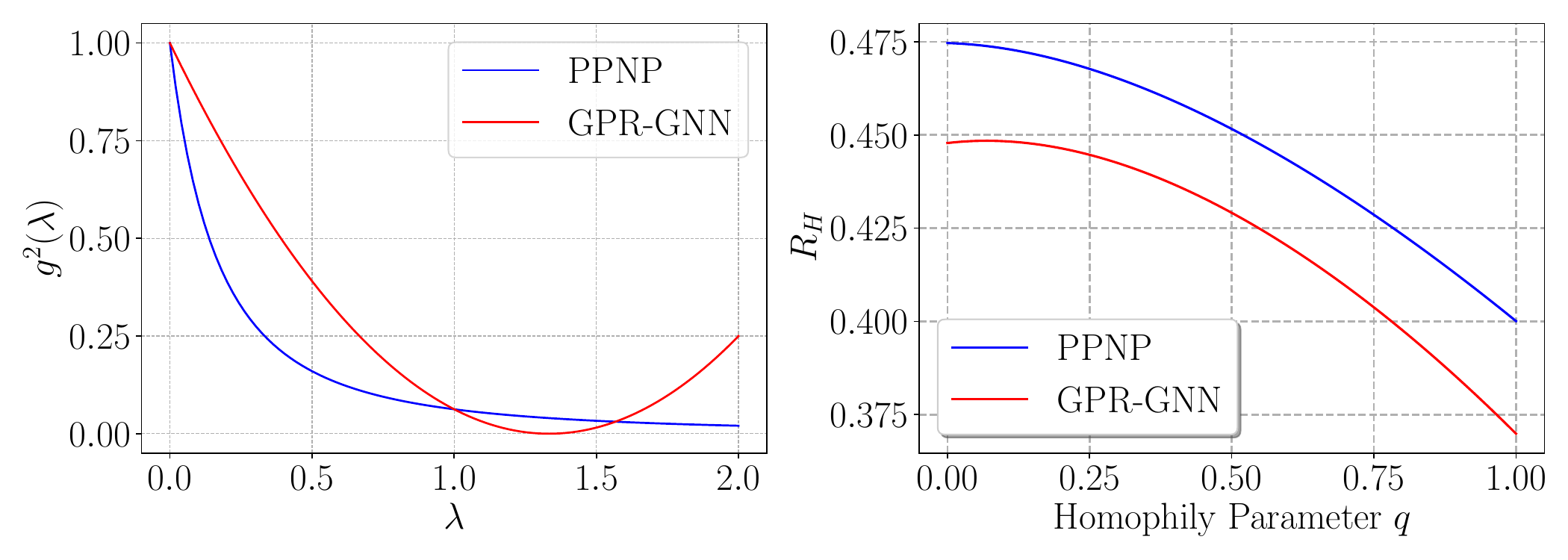}
        \caption{\textbf{Two variants of PageRank-based networks.} 
        \textbf{Left:} Frequency response showing the improvement of GPR-GNN over PPNP. 
        \textbf{Right:} $R_H$, averaged over number of eigenvalues, for GPR-GNN and PPNP over homophily parameter $q$.}
        \label{fig:pagerank_comparison}
    \end{subfigure}
    
    \caption{Frequency response and theoretical error of Chebyshev and PageRank-based GNNs.}
    \label{fig:freq_response}
\end{wrapfigure}

\textbf{ChebNet suffers from the Runge phenomenon.} 
\cite{DBLP:conf/nips/HeWW22} highlights that ChebNet II outperforms ChebNet by addressing the Runge phenomenon \citep{Epperson}, which causes high-amplitude oscillations near the boundaries of the approximation interval when using higher-degree Chebyshev polynomials. Our analysis of frequency responses (Figure \ref{fig:chebnet_comparison} left) shows that the oscillations are smoothed out for ChebNet II, therefore leading to a better theoretical error $R_H$ according to Corollary \ref{gen-err} (Figure \ref{fig:chebnet_comparison} right). 


\textbf{PPNP is worse at retaining high frequency than GPR-GNN.} 
Between the pagerank-based GNNs, PPNP and GPR-GNN, \cite{DBLP:conf/iclr/ChienP0M21} observes that PPNP usually performs worse than GPR-GNN and 
suggests that suppressing high-frequency components makes PPNP inadequate for heterophilic graphs. Using their frequency responses, we show that GPR-GNN retains some high-frequency components while PPNP has none (Figure \ref{fig:pagerank_comparison} left), which leads to better theoretical error $R_H$ for GPR-GNN (Figure \ref{fig:pagerank_comparison} right).


These findings show that investigating GNNs as filters provides insights into their ability to handle different levels of homophily, depending on whether the filters pass low or high frequencies (referred to as low-pass or high-pass filters). 
Exemplary, Chebyshev-based GNNs pass both low and high frequencies (Figure~\ref{fig:chebnet_comparison}) and thus, can handle strongly homophilic or heterophilic graphs but struggle in the intermediate range. In contrast, PPNP retains only low frequencies while GPR-GNN retains some high-frequency information (Figure~\ref{fig:pagerank_comparison}).
Analysis on other GNNs is in Appendix \ref{Ap:homo_exp}.
Appendix \ref{Ap:Oversmoothing} presents a filtering-based perspective on oversmoothing with insights into the limitations of different GNNs in relation to the spectral properties of the graph.

\textbf{GCN, a low pass filter, fails (mostly) on heterophilic data.}
GCNs act as low-pass filters, which intuitively makes them well-suited for homophilic graphs; however, a closer look reveals that this intuition does not always hold. As presented in Figure \ref{fig:intro}, GCN performs \emph{better} on a semi-synthetic more heterophilic Squirrel variant than on the original benchmark, and at the same time fails on perfectly homophilic duplicated-Cora with copied features, motivating a theoretical analysis of its generalisation behaviour. 
\begin{corollary}[\textbf{Derivative of Generalisation Error with Respect to Homophily}]
\label{Cor:derivative}
The derivative of the error in Corollary \ref{gen-err} with respect to \( q \) can be computed analytically for a single-layer GCN with normalized filter \( g(\Lambda) \) lying in \([0, 1]\), under the framework given in Section~\ref{Ch:pre} and \( h^{\star}(\Lambda) \) (and \( f(\Lambda) \)) in Equation~\ref{eq:f_lambda}. \(\displaystyle \frac{dR_{\text{GCN}}}{dq} \) is in Appendix \ref{Ap:cor42}.
\end{corollary}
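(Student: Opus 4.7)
}

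The plan is to treat $q$ as the only free parameter, write every quantity in $R_{\text{GCN}}$ as an explicit function of $q$, and then apply the chain rule termwise. Setting $\ell=1$ in Equation~\eqref{Eq:H} gives $\tilde{\lambda}_i = g(\lambda_i)^2 f(\lambda_i)$, and from Equation~\eqref{eq:f_lambda} both $\lambda_i^\star = f(\lambda_i)$ and $\tilde{\lambda}_i$ are affine in $q$ with the same slope factor $\tfrac{df_i}{dq} = 1-\lambda_i$, where $f_i := f(\lambda_i)$. Writing $\alpha_i := g(\lambda_i)^2 \in [0,1]$ (by the normalisation hypothesis), so that $\tilde{\lambda}_i = \alpha_i f_i$ and $\tilde{\lambda}_i - \lambda_i^\star = (\alpha_i - 1) f_i$, collects all the $q$-dependence into $f_i$ and makes $\alpha_i$, $\lambda_i$, and $c$ constants for the purpose of differentiation.

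Next, I would partition the eigenindices into $\mathcal{Z} = \{i : \tilde{\lambda}_i = 0\}$ and $\mathcal{P}=\{i : \tilde{\lambda}_i > 0\}$ exactly as in Corollary~\ref{gen-err}, and treat them separately. For $i \in \mathcal{Z}$ with $g(\lambda_i)=0$, the contribution is simply $\lambda_i^\star = f_i$, whose $q$-derivative is $1-\lambda_i$. For $i \in \mathcal{P}$, the summand is
\begin{equation*}
T_i(q) \;=\; \frac{\alpha_i f_i \, c}{\alpha_i f_i + c} \;-\; (\alpha_i - 1)\, f_i \,\frac{c^2}{(\alpha_i f_i + c)^2},
\end{equation*}
which is a rational function of $f_i$. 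Applying quotient- and product-rule differentiation, using $\tfrac{d}{dq}(\alpha_i f_i) = \alpha_i(1-\lambda_i)$, the two pieces combine (after cancelling the $\alpha_i - (\alpha_i-1)$ in the leading order) into
\begin{equation*}
\frac{dT_i}{dq} \;=\; (1-\lambda_i)\,\frac{c^2}{(\alpha_i f_i + c)^2} \;+\; 2\alpha_i(\alpha_i-1)(1-\lambda_i)\, f_i\,\frac{c^2}{(\alpha_i f_i + c)^3}.
\end{equation*}
Summing over $i \in \mathcal{Z}$ and $i \in \mathcal{P}$ then gives the closed-form expression for $\tfrac{dR_{\text{GCN}}}{dq}$ that is recorded in Appendix~\ref{Ap:cor42}.

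The main subtle point, and what I would treat as the one genuine obstacle, is the partition $\mathcal{Z}/\mathcal{P}$ itself depending on $q$: as $q$ varies, $f_i$ can cross zero (precisely at $q = \tfrac{\lambda_i/2}{\lambda_i - 1}$ when $\lambda_i \neq 1$), so an index migrates between $\mathcal{Z}$ and $\mathcal{P}$ and $R_{\text{GCN}}$ is only piecewise smooth in $q$. I would handle this by restricting the derivative to the generic open set $\{q \in [0,1] : f(\lambda_i)\neq 0 \text{ for all } i \text{ with } g(\lambda_i)\neq 0\}$, which is $[0,1]$ minus finitely many points, and noting that on this set the partition is locally constant so the termwise differentiation above is valid. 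At the exceptional values of $q$, both one-sided derivatives exist and can be read off from the same formula by taking the appropriate limit $f_i \to 0^{\pm}$; since the $\mathcal{P}$-term $T_i$ vanishes continuously at $f_i=0$, no spurious jumps are introduced. Apart from this bookkeeping, the proof is a direct chain-rule calculation, so the substance of the result is the explicit algebraic form rather than any analytical difficulty.
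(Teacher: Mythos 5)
Your proposal is correct and follows essentially the same route as the paper: the paper also just differentiates each summand of Corollary~\ref{gen-err} termwise in $q$ (writing the GCN summand in terms of $a=\lambda_i/2$ and obtaining the expanded rational expression via a symbolic solver in Appendix~\ref{Ap:cor42}), and your hand-derived factored form agrees with that expression (e.g.\ both give $c^2/(q+c)^2$ at $\lambda_i=0$ and $-1$ at $\lambda_i=2$). Your extra care about indices migrating between $\{\tilde{\lambda}_i=0\}$ and $\{\tilde{\lambda}_i>0\}$ is harmless but unnecessary: the two branches of Corollary~\ref{gen-err} agree in value and first derivative at $\tilde{\lambda}_i=0$, and with $f$ as in Equation~\eqref{eq:f_lambda} one has $f(\lambda_i)\ge 0$ on $q\in[0,1]$ with interior zeros only at the boundary, so $R_{\text{GCN}}$ is in fact smooth in $q$.
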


\begin{wrapfigure}[11]{r}{0.5\columnwidth}
    \centering
    \vspace{-10pt} 
    \includegraphics[width=\linewidth]{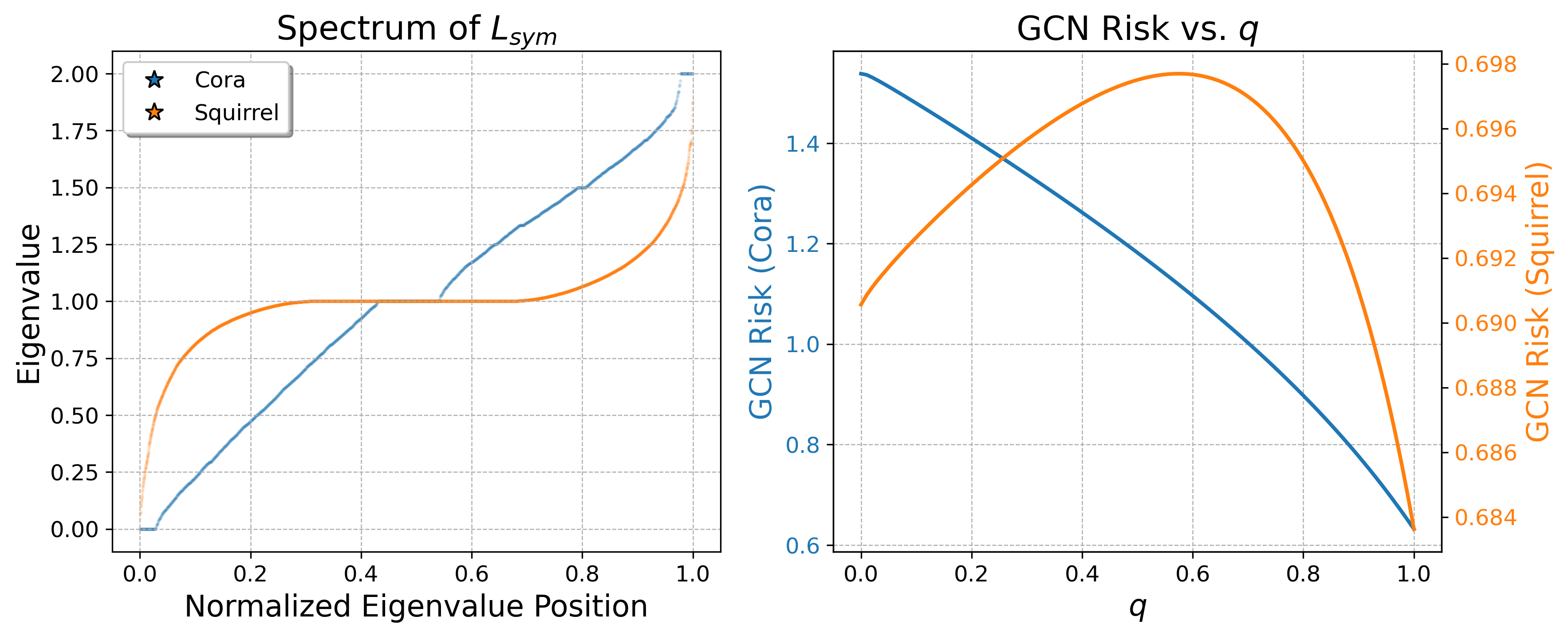}
    \caption{\textbf{Spectral Symmetry and Theoretical Error.} 
    Left: \(\lambda\) of Cora and Squirrel, showing approximate spectral symmetry. 
    Right: \(R_{\text{GCN}}\), averaged over the number of \(\lambda\), as a function of \(q\) with \(c=0.01\).}
    \label{fig:eigenvalues_and_gcn_risk_cora_squirrel}
    \vspace{1pt}
\end{wrapfigure}

Since the derivative of the generalisation error with respect to homophily has complex dependencies, we investigate its sign numerically. Refer to Appendix \ref{Ap:cor42} for the explicit derivative expression and solver queries.
\begin{remark}\textbf{Sign of the Derivative of Generalisation Error. }
 Consider graphs with a symmetric spectrum i.e., if \( \lambda \) is an eigenvalue, then so is \( \lambda_{\max} - \lambda \), where \( \lambda_{\max}\) is the largest eigenvalue of \(L\), and the eigenvalue \(1\) has multiplicity \(n-d\).
\textbf(i) if \( \lambda_{\max} = 2 \)  and \(c>0.1\) then
\(\displaystyle
    \frac{dR_{\text{GCN}}}{dq} < 0.
    \)
 \textbf(ii) 
if \textbf{\( \lambda_{\max} < 2\)}, then, for sufficiently small values of \( c \) (depending on \( 2-\lambda_{\max} \)),
    \(\displaystyle
    \frac{dR_{\text{GCN}}}{dq} > 0.
    \)
\label{Cor:Derivative}
\end{remark} Remark~\ref{Cor:Derivative} assumes spectral symmetry of \(L\), which generally does not always hold. However, as illustrated in Figure~\ref{fig:eigenvalues_and_gcn_risk_cora_squirrel}, the spectra of real-world graphs such as Cora and Squirrel are approximately symmetric, and the eigenvalue \(1\) has a large multiplicity. For Cora, where \(\lambda_{\max}=2\), the generalisation error in Corollary \ref{gen-err} decreases with increasing homophily \(q\). For Squirrel, which also has an approximately symmetric spectrum but \(\lambda_{\max} < 2\), the error neither decreases nor increases monotonically, since the term \(c\) is not sufficiently small. Additional experiments, including an ablation on the noise level are provided in Appendix \ref{Ap:homo_exp}, along with results on other GNN architectures.
Consequently, our spectral definition effectively captures the distinction between homophilic and heterophilic settings, even when the assumptions are only approximately satisfied.

\begin{wrapfigure}{r}{0.51\columnwidth}
    \centering
    \vspace{-15pt} 
    \includegraphics[width=\linewidth]{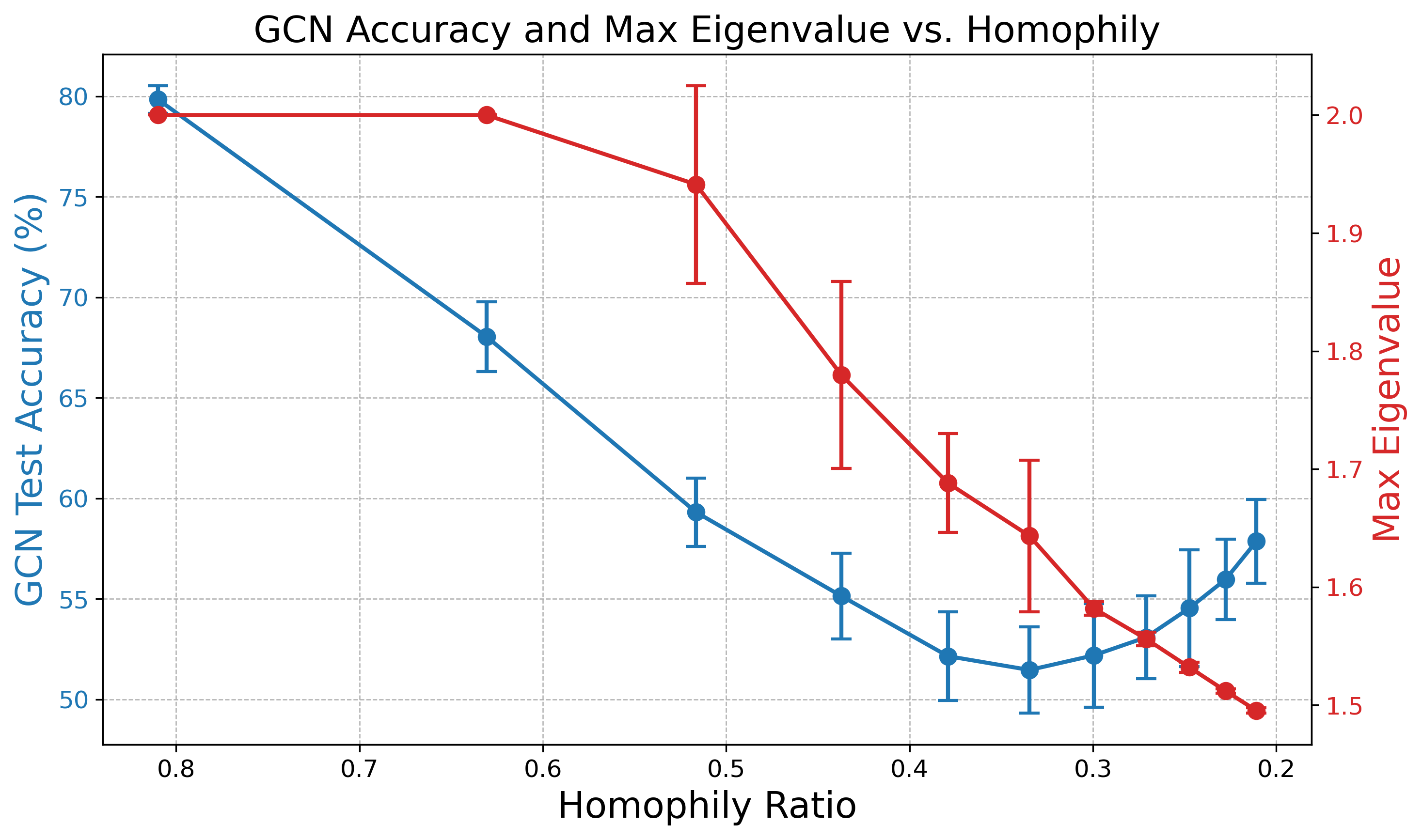}
    \caption{\textbf{Effect of Laplacian Spectrum on GCN Accuracy under Varying Homophily.} 
    GCN test accuracy (left) and the \(\lambda_{\max}\) (right).}
    \label{fig:acc_homophily}
    \vspace{-10pt} 
\end{wrapfigure}

To study how the Laplacian spectrum affects generalisation error with respect to homophily in a realistic setting, we conduct a perturbation experiment on Cora. We introduce heterophilic edges, sampled independently from a neighbourhood distribution that excludes neighbours with the same label. The homophily score is measured as the fraction of edges that connect nodes sharing the same label, as is standard for real-world graphs \citep{DBLP:conf/nips/ZhuYZHAK20}.
Figure~\ref{fig:acc_homophily} shows that the GCN accuracy initially decreases as homophily is reduced. However, the accuracy begins to recover once the largest eigenvalue becomes sufficiently small, which is consistent with the theoretical prediction in Remark \ref{Cor:Derivative}. A similar experiment indicating that GCNs perform well even under extreme heterophily appears in \cite{DBLP:conf/iclr/0001LST22} while our analysis provides mathematical backing to observe how homophily influences the performance of GCN and can be extended to other GNNs.

\subsection{Repeating Eigenvalues: What Does Graph Attention Miss}
\label{Ch:GAT}
In this section, we compare the attention-based models GAT \citep{DBLP:conf/iclr/VelickovicCCRLB18} and Specformer \citep{DBLP:conf/iclr/BoSWL23}, which both have a trainable convolution. We abstract this through the following trainable graph filter view of GAT and Specformer, which is in line with \cite{DBLP:conf/iclr/BalcilarRHGAH21,DBLP:journals/tmlr/BastosNSKSM22}, but mathematically more formal.

\begin{definition}[\textbf{Spectral View of Graph Attention Networks and Specformer}]
GAT and Specformer can be interpreted as a spectral filter with a trainable frequency response $g$.
GAT apply $g$ to individual eigenvalues $\lambda_i$ of the graph Laplacian, leading to the generalisation error defined as
\(\displaystyle
    R_{\text{GAT}} = \inf_{\substack{g: \lambda \mapsto \tilde{\lambda}}} R_{H}.
\) Specformer, in contrast, applies a set-to-set function $g$ to the entire spectrum $\Lambda$, transforming it to $\tilde{\Lambda} = g(\Lambda)$, leading to the generalisation error defined as
\( \displaystyle
    R_{\text{SF}} = \inf_{\substack{ g: \Lambda \mapsto \tilde{\Lambda}}} R_{H}.
\)
\textit{We consider an idealised version of GAT and Specformer, where the infimum is taken over \(R_H\).}
\label{Def:gat}
\end{definition}

Like traditional polynomial-based spectral GNNs \citep{DBLP:conf/aaai/LuY0LYGLYC24}, GAT aggregates information from immediate neighbours and faces a limitation when dealing with repeated eigenvalues \citep{DBLP:conf/iclr/BoSWL23}. GAT applies the same filtering to all eigenvectors associated with a repeated eigenvalue, potentially losing some structural information, which can be formalised using our framework as shown in the following corollary to Theorem \ref{thm:aniso}  (proof in Appendix \ref{Ap:gat_specformer}):
\begin{corollary}[\label{cor:gat_specformer}\textbf{Generalisation Error between GAT and Specformer}] Consider the generalisation error in Theorem~\ref{thm:aniso}. Let 
\(
C = \left\{
\begin{aligned}
&\mathcal{I} \subseteq [d] \,:\ 
|\mathcal{I}| \ge 2,\ 
\lambda_i = \lambda_j, \lambda_i^\star = \lambda_j^\star,\\
&\mathbb{E}\langle \theta^\star, \phi_i \rangle 
\neq \mathbb{E}\langle \theta^\star, \phi_j \rangle
\ \forall i,j\in\mathcal{I}
\end{aligned}
\right\}
\)
be the collection of index sets of repeated eigenvalues with different alignment of \(\theta^\star\). Then, the difference in generalisation errors between GAT and Specformer is given by  \(R_{\text{GAT}} - R_{\text{SF}} =\)
\(
\sum_{\substack{\mathcal{I} \in C}}\left[
\frac{ \left( \sum_{i \in \mathcal{I}} \lambda_i^\star \mathbb{E}\langle \theta^\star, \phi_i \rangle^2 \right) c}
{ \left( \frac{1}{|\mathcal{I}|} \sum_{i \in \mathcal{I}} \lambda_i^\star \mathbb{E}\langle \theta^\star, \phi_i \rangle^2 \right) + c}
- \sum_{i \in \mathcal{I}} \frac{ \lambda_i^\star \mathbb{E}\langle \theta^\star, \phi_i \rangle^2  c}
{ \lambda_i^\star \mathbb{E}\langle \theta^\star, \phi_i \rangle^2 + c}
\right].
\)
\end{corollary}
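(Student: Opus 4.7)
The plan is to apply Theorem~\ref{thm:aniso} directly, exploit the fact that the risk decomposes as a sum over $i$, and reduce the problem to a one-dimensional optimisation for each index, where the only difference between GAT and Specformer is the feasible set. Introduce the shorthand $a_i := \lambda_i^\star \, \mathbb{E}\langle \phi_i, \theta^\star \rangle^2$. A short algebraic simplification of the summand in Theorem~\ref{thm:aniso} gives the compact form
\begin{equation*}
\phi(\tilde{\lambda}; a) \;:=\; \frac{\tilde{\lambda} c}{\tilde{\lambda}+c} - (\tilde{\lambda}-a)\frac{c^2}{(\tilde{\lambda}+c)^2} \;=\; \frac{c(\tilde{\lambda}^2 + a c)}{(\tilde{\lambda}+c)^2},
\end{equation*}
so that $R_H = \sum_{i=1}^{d}\phi(\tilde{\lambda}_i; a_i)$, with $R_{\text{GAT}}$ and $R_{\text{SF}}$ obtained by taking the infimum of this sum over their respective feasible choices of $(\tilde{\lambda}_i)_i$.

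Next I would solve the one-dimensional problem $\min_{\tilde{\lambda}\ge 0}\phi(\tilde{\lambda};a)$. Differentiation gives $\phi'(\tilde{\lambda};a) = 2c^2(\tilde{\lambda}-a)/(\tilde{\lambda}+c)^3$, so the unique minimiser is $\tilde{\lambda}^{\star} = a$ with optimal value $\phi(a;a) = ac/(a+c)$. For Specformer, the set-to-set map can realise any assignment $\tilde{\lambda}_i = a_i$, yielding $R_{\text{SF}} = \sum_i a_i c/(a_i+c)$. For GAT, the pointwise map $g:\lambda\mapsto \tilde{\lambda}$ forces $\tilde{\lambda}_i = \tilde{\lambda}_j$ whenever the underlying Laplacian eigenvalues (and hence $\lambda_i^\star$) coincide. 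Grouping the indices by the equivalence classes of equal eigenvalues and using linearity of $\phi$ in its second argument, for each class $\mathcal{I}$:
\begin{equation*}
\sum_{i\in\mathcal{I}}\phi(\tilde{\lambda};a_i) \;=\; |\mathcal{I}|\,\phi\!\left(\tilde{\lambda};\bar{a}_{\mathcal{I}}\right), \qquad \bar{a}_{\mathcal{I}} := \frac{1}{|\mathcal{I}|}\sum_{i\in\mathcal{I}} a_i,
\end{equation*}
which, by the same scalar minimisation, is optimised at $\tilde{\lambda} = \bar{a}_{\mathcal{I}}$ with value $|\mathcal{I}|\bar{a}_{\mathcal{I}}c/(\bar{a}_{\mathcal{I}}+c) = (\sum_{i\in\mathcal{I}} a_i)c/(\bar{a}_{\mathcal{I}}+c)$.

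The final step is to subtract $R_{\text{SF}}$ from $R_{\text{GAT}}$ group by group. For singletons, and for any class where all $a_i$ are already equal, the constrained minimiser $\bar{a}_{\mathcal{I}}$ coincides with the unconstrained minimisers $a_i$, so the two models contribute identically and cancel. Precisely the classes in $C$ survive (those with repeated eigenvalues and genuinely different alignments), producing
\begin{equation*}
R_{\text{GAT}} - R_{\text{SF}} \;=\; \sum_{\mathcal{I}\in C}\left[\frac{\bigl(\sum_{i\in\mathcal{I}} a_i\bigr) c}{\bar{a}_{\mathcal{I}} + c} \;-\; \sum_{i\in\mathcal{I}}\frac{a_i c}{a_i + c}\right],
\end{equation*}
which, after substituting back $a_i = \lambda_i^\star \,\mathbb{E}\langle\theta^\star,\phi_i\rangle^2$, is the identity claimed in the statement.

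The main obstacle is conceptual rather than technical: one must argue carefully that the infimum in Definition~\ref{Def:gat} is actually realised at the spectral averaging solution, i.e.\ that restricting GAT to a pointwise frequency response is exactly what forces the tied-index constraint, while Specformer's set-to-set map faces no such restriction. Once this identification is made, the remainder is a clean calculus exercise: a scalar convex (in the relevant region) minimisation of $\phi$, combined with the observation that $\phi$ is linear in $a$, which is precisely what turns the GAT optimum into the average $\bar{a}_{\mathcal{I}}$ and isolates the loss to the index sets in $C$.
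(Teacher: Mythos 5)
Your proposal is correct and follows essentially the same route as the paper's proof in Appendix~\ref{Ap:gat_specformer}: both reduce the risk to a per-index scalar minimisation of the summand from Theorem~\ref{thm:aniso}, observe that GAT's pointwise frequency response ties $\tilde{\lambda}_i$ within each class of repeated eigenvalues so that the constrained optimum is the class average $\bar{a}_{\mathcal{I}}$, while Specformer attains the unconstrained per-index optimum $a_i$, and then subtract group by group. Your compact form $\phi(\tilde{\lambda};a)=c(\tilde{\lambda}^2+ac)/(\tilde{\lambda}+c)^2$ and the explicit use of linearity in $a$ are just a cleaner packaging of the same calculation.
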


Corollary \ref{cor:gat_specformer} states that if there are at least two eigenvalues with the same value but different \(\mathbb{E}\langle \theta^\star, \phi_i \rangle^2\), the difference in generalisation error between GAT and Specformer is strictly positive (\(\displaystyle  R_{\text{SF}}<  R_{\text{GAT}}\)). GAT's inability to assign different frequency responses to eigenvectors corresponding to the same eigenvalue introduces an additional error.

\begin{wrapfigure}[18]{r}{0.5\columnwidth}
    \centering
    \vspace{-10pt} 
    \includegraphics[width=\linewidth]{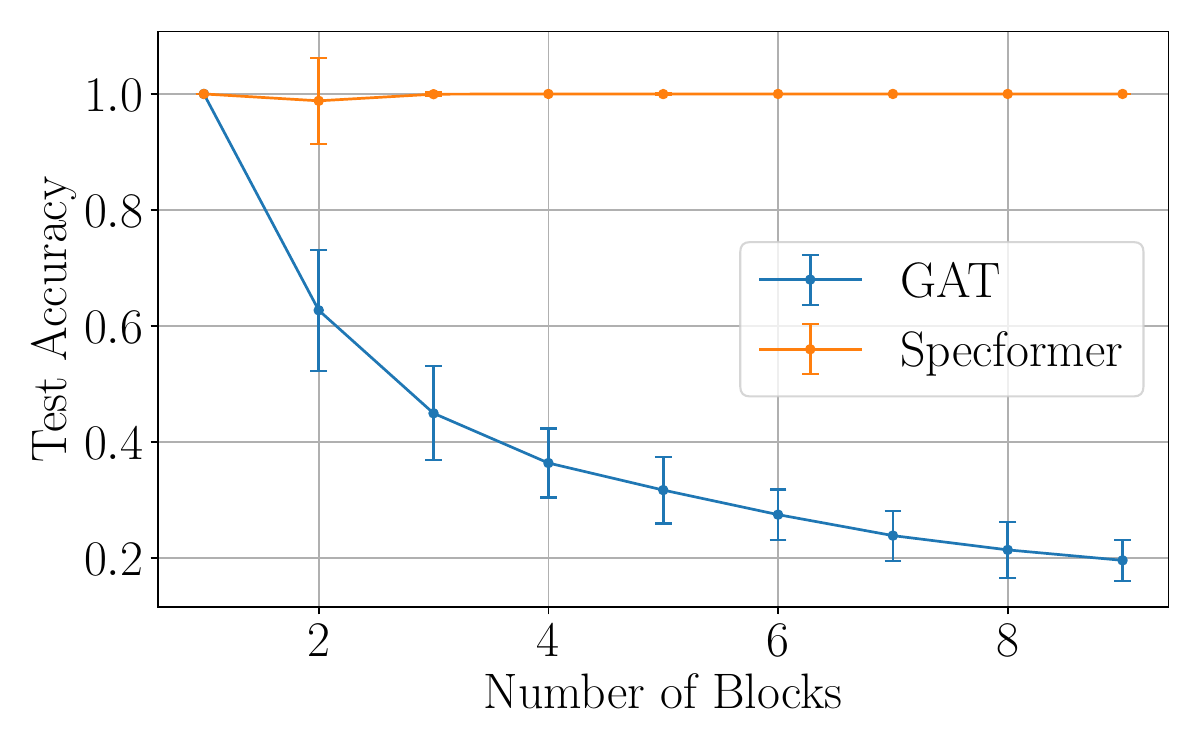}
    \caption{\textbf{Effect of Eigenvalue Multiplicity on GAT and Specformer Performance.} 
    Accuracy (averaged over 50 runs) as a function of the number of cycle graph blocks is plotted. 
    As more blocks are added, the multiplicity of eigenvalues in the Laplacian spectrum increases.}
    \label{fig:repeating_eigenvalue}
\end{wrapfigure}
To empirically validate Corollary~\ref{cor:gat_specformer}, we compare Specformer and GAT on a synthetic dataset constructed from disjoint cycle graphs with random node features, where all nodes in a cycle share the same label. At each step, we grow the graph by adding a new cycle graph and a unique label, thus increasing the total number of labels with every addition, as detailed in Appendix \ref{Ap:Experiments}. 
One can consider the one-hot encoded label matrix as the underlying data matrix, inducing a block-diagonal structure with one rank-one block per cycle; each added cycle increases the multiplicity of the zero eigenvalue and the multiplicity of Laplacian eigenvalues. As illustrated in Figure~\ref{fig:repeating_eigenvalue}, Specformer has near-perfect accuracy, while the performance of GAT decreases with the number of repeated eigenvalues. 
This experiment clearly supports our theoretical finding that Specformer generalises better than GAT in the presence of repeating eigenvalues.

\section{Conclusion}
\label{Ch:conclusion}
By adopting a signal processing perspective on GNNs, we derive the \emph{exact} generalisation error and gain valuable insights into the conditions under which GNNs, including attention-based models, can effectively leverage graph data.
\begin{mybox}{hellblue}{\textcolor{black}{Key Theoretical Insights}}
$1.$ Generalisation depends strongly on the \textbf{alignment between graph structure and node features}. \\
$2.$ Homophily affects generalisation differently across architectures. \\
$3.$ GCN performance can improve under \textbf{extreme heterophily}, contrary to common intuition.
\end{mybox}
Our insights from the theory highlight an important inherent bias in the current benchmarking. Most theoretical works on understanding GNNs \citep{shi2024homophily,DBLP:conf/iclr/WuCWJ23,luan2023graph}, discussed in Appendix \ref{Ap:related_work}, assume the Contextual Stochastic Block Model (CSBM), which presupposes alignment between graph and features \citep{DBLP:conf/nips/DeshpandeSMM18}. This mirrors the nature of most benchmark datasets, leading to the development of GNNs that rely on alignment for better performance. Consequently, they perform poorly in cases where the alignment is weak, as demonstrated in Lemma \ref{lemma:misalignment}, which raises the key question:  
\emph{Are graph convolutions truly optimal for combining feature and structure information across diverse graph regimes?}

A comparable trend was observed earlier, with many benchmark datasets being predominantly homophilic \citep{DBLP:journals/corr/abs-2104-01404}, sparking the debate on the progress of GNNs on heterophilic graphs \citep{DBLP:journals/corr/abs-1905-09550}. 
These observations underscore the need for caution in benchmarking, with dataset characteristics, such as misalignment and homophily, serving as tools to evaluate datasets more critically. However, a rigorous theoretical understanding is still lacking, which our analysis begins to address. The gaps between theory and practice persist, as many empirical observations about GNNs such as oversquashing \citep{DBLP:conf/iclr/0002Y21} and over-globalisation \cite{DBLP:conf/icml/Xing0LHS24} have not been fully formally explained. Our results emphasise that bridging these gaps is essential to prevent experiment-driven intuition from producing misleading insights.

\section*{Broader Impact}
This study provides the first exact generalisation error for GNNs, exposing strong benchmark biases that can skew the success or failure of different architectures. These results have no foreseeable negative societal impact. On the contrary, they offer theoretical tools for principled model selection and critical evaluation of existing benchmarks, advancing the reliable use of GNNs across scientific and engineering applications.

\section*{Reproducibility}
All code and configurations used for the experiments are available at the following link:
\url{https://figshare.com/s/61f8fafb9469750f173e}
\bibliography{references}

\newpage
\appendix
\onecolumn

\section{Related Work}
\label{Ap:related_work}
\paragraph{The Role and Limitations of Feature-Graph Alignment in GNN Theory.}
Many works on the theoretical understanding of Graph Neural Networks (GNNs) adopt the contextual Stochastic Block Model (CSBM) \citep{DBLP:conf/nips/DeshpandeSMM18}, which couples the graph structure of the Stochastic Block Model (SBM) with node features generated conditionally on the community assignment. For instance, \cite{shi2024homophily} leverages CSBM to characterise the generalisation behaviour of GCNs analytically.
 \cite{DBLP:conf/iclr/WuCWJ23} uses CSBM to distinguish denoising and mixing effects in oversmoothing and evaluate how modifications like residual connections affect this trade-off. The work \cite{luan2023graph} introduces a variant of CSBM (CSBM-H) to formalise intra- and inter-class node distinguishability, showing that the effectiveness of GNNs is governed not just by homophily, but by a more nuanced interaction between neighbourhood similarity and class separability. While these works advance the theoretical understanding of GNNs, their reliance on the CSBMs limits their analysis to settings where node features and graph structure are well-aligned. \cite{DBLP:journals/tnn/QianERPB22} investigates this limitation and introduces a subspace alignment measure to capture the interplay between graph structure, features, and ground truth labels. However, it lacks theoretical justification. 
 \paragraph{Limitations of GNNs.}
Earlier research has examined failure scenarios such as oversmoothing, in which node representations become increasingly indistinguishable as the depth of the GNN increases. Theoretical analysis of linear GNNs in \cite{DBLP:conf/nips/Keriven22}, based on the generalisation error, demonstrates that while moderate smoothing steps can enhance performance by amplifying principal feature directions and community structures, deeper networks inevitably drive node features to uninformative constants. Other theoretical studies support this view. For instance, \cite{DBLP:conf/iclr/OonoS20} show that as the number of layers in a GCN increases, its ability to distinguish node features collapses exponentially---eventually retaining only information about connected components and node degrees. \cite{DBLP:journals/corr/abs-2008-09864} proves that various GCN variants converge to a cuboid as the number of layers tends to infinity. \cite{DBLP:journals/corr/abs-2303-10993} defines oversmoothing as exponential similarity convergence and shown empirically across architectures and datasets. 
Using the kernel equivalence of GCN, \cite{DBLP:journals/tmlr/SabanayagamEG23} shows the effect and rate at which oversmoothing happens with depth.
Another similar phenomenon is oversquashing, the distortion of long-range information flow caused by graph bottlenecks \citep{DBLP:conf/iclr/ToppingGC0B22,10.1109/Allerton49937.2022.9929363, pmlr-v202-black23a}. Oversmoothing and oversquashing are inherent limitations of GNNs, but can not explain the observed performance gap between different GNN architectures.
\paragraph{Expressivity of GNNs.}
The typical expressive power of Graph Neural Networks analysis is based on the Weisfeiler–Lehman (WL) graph isomorphism test \citep{DBLP:conf/aaai/0001RFHLRG19}, which determines whether two graphs are topologically equivalent, i.e., isomorphic.
\cite{DBLP:conf/iclr/XuHLJ19} establishes that standard message-passing neural networks (MPNNs)—a class of spatial graph neural networks that update node representations by iteratively aggregating information from neighboring nodes—cannot surpass the discriminative power of the 1-WL test, as their aggregation schemes produce identical node colorings for graphs deemed equivalent by the test. The Graph Isomorphism Network (GIN) \citep{DBLP:conf/iclr/XuHLJ19} architecture emerged as a provably maximally expressive MPNN under the 1-WL framework. However, despite its expressivity, GIN does not exhibit superior generalisation performance compared to other MPNNs \citep{DBLP:journals/corr/abs-2401-15444}, indicating that expressivity alone can not explain generalisation \citep{9892655}.
\paragraph{Generalisation Error Bounds.}
Several prior works have bounded the generalisation error of GNNs using tools from statistical learning theory. These include approaches based on algorithmic stability \citep{DBLP:conf/kdd/VermaZ19,DBLP:journals/ijon/ZhouW21}, Rademacher complexity \citep{DBLP:conf/nips/EsserVG21,DBLP:journals/corr/abs-2102-10234}, and PAC-Bayes theory \citep{DBLP:conf/iclr/LiaoUZ21}. However, these bounds are often restricted to a single architecture, loose, and do not capture the true generalisation. For instance, \cite{DBLP:conf/nips/EsserVG21} shows that applying classical tools such as VC dimension to GNNs can lead to vacuous bounds. \cite{DBLP:conf/icml/TangL23} further investigates the generalization properties of GNNs, showing how the network architecture impacts the generalization gap. Recent work \citep{DBLP:conf/iclr/LiG0025} have attempted to bridge the gap between generalisation and expressivity, proposing approaches like k-variance margin-based generalisation bounds. However, their bound is limited to MPNNs and does not fully explain observed performance variations across different models. Recent advances by \cite{DBLP:conf/iclr/RauchwergerJL25} provide distribution-agnostic generalisation bounds and universal approximation proofs for MPNNs. In contrast to these bounds, which are often architecture-specific and offer limited practical value, our work provides an exact characterisation of the generalisation error for a broad class of GNNs by adopting a signal processing perspective and offers clear insights into when and why GNNs can effectively leverage graph structure and node features.
\section{Frequency Response of GNNs}
\label{Ap:frequency_response}
As discussed in Section 2, we consider GNNs of the form \( S Z \theta \), where \( S \) denotes the propagation operator applied to the feature matrix \( Z \). In the case of a single support, we denote this operator by \( C \), i.e., \( S = C \).

More generally, GNNs may use multiple propagation supports \( \{C^{(j)}\}_{j=1}^m \), each defining a different pattern of feature propagation across the graph. In this case, the overall propagation operator becomes
\[
S = \begin{bmatrix}
C^{(1)} & C^{(2)} & \cdots & C^{(m)}
\end{bmatrix}.
\]

Each \( C^{(j)} \in \mathbb{R}^{n \times n} \) represents a \emph{convolution support}, which defines how node features are aggregated from neighboring nodes. Each \( C^{(j)} \) shares a common eigenbasis (e.g., derived from the graph Laplacian), and has a spectral response \( g_j(\Lambda) \). \cite{DBLP:conf/iclr/BalcilarRHGAH21} study the spectral response of individual convolution supports. However, when analysing the theoretical generalisation error in Theorem 3.1, the relevant quantity is \( S S^\top \) and it satisfies
\[
S S^\top = \sum_{j=1}^m C^{(j)} C^{(j)\top},
\]
and its spectral response becomes
\[
g^2(\Lambda) = \sum_{j=1}^m g_j^2(\Lambda).
\]

Consider \textbf{ChebNet} as an example of a GNN with multiple convolutional support: \\
\begin{align*}
    C^{1} &= I = UU^T, \quad C^{2} = \frac{2L}{\lambda_{\max}} - I = U(2\Lambda/\lambda_{max}-I)U^T, \quad C^{k} = 2C^{2}C^{k-1} - C^{k-2} \\
    S^{1} &= C^{1}Z W^{(0,1)} + C^{2}Z W^{(0,2)} + \dots = H_{\text{ChebNet}} W'^{T}
\end{align*}

where  
\begin{align*}
    H_{\text{ChebNet}} &:= \big[ C^{1}Z \quad C^{2}Z \quad \dots \big] \\
    &= \big[ Uf^{1/2}(\Lambda)\Phi^T \quad U(2\Lambda/\lambda_{max}-I)f^{1/2}(\Lambda)\Phi^T \quad \dots \big], \\
    W' &= \big[ W^{(0,1)} \quad W^{(0,2)} \quad \dots \big].
\end{align*}

\begin{align*}
H_{\text{ChebNet}}H_{\text{ChebNet}}^T &= U (f(\Lambda) +(2\Lambda/\lambda_{max}-I)^2f(\Lambda)+ \dots \big) U^T \\
&:= U \tilde{\Lambda} U^T
\end{align*}

\textbf{Remark on ChebNetII.} As seen in Table \ref{tab:gnn_summary} the variant referred to as ChebNetII in this work corresponds to ChebBase/s from \cite{DBLP:conf/nips/HeWW22}. It follows the same principle as ChebNetII in mitigating Runge’s phenomenon, as detailed in \cite{DBLP:conf/nips/HeWW22}.

Now consider the operator
\[
S := (D + I)^{-1/2}(A + I)(D + I)^{-1/2},
\]
which appears in several GNNs including PPNP and GPR-GNN. Its frequency response can be approximated as
\begin{equation}
g(\lambda) \approx 1 - \lambda \cdot \frac{\bar{p}}{\bar{p} + 1},
\label{Eq:fr}
\end{equation}
where \( \bar{p} \) denotes the average node degree in the graph. This expression assumes the graph is approximately regular, following the approximation used in \cite{DBLP:conf/iclr/BalcilarRHGAH21}.

We provide frequency responses of several GNN architectures, including PPNP, GPR-GNN, Highpass, and FAGCN in Table \ref{tab:gnn_summary}. For PPNP and GPR-GNN, the response of \((D + I)^{-1/2}(A + I)(D + I)^{-1/2}\) directly yields the corresponding expressions. In the case of Highpass and FAGCN, the frequency response follows from the convolution definitions by substituting the Laplacian \(L\) with its eigenvalue \(\lambda\).

\begin{itemize}
  \item \textbf{PPNP:} The convolution matrix is
  \[
  C = \alpha \left( I - (1 - \alpha) (D + I)^{-1/2} (A + I) (D + I)^{-1/2} \right)^{-1}.
  \]
  Substituting Equation \ref{Eq:fr} its frequency response becomes
  \[
  g(\lambda) \approx \alpha \left(1 - (1 - \alpha)\left(1 - \frac{\lambda \bar{p}}{\bar{p} + 1}\right)\right)^{-1}.
  \]

  \item \textbf{GPR-GNN:} The convolution matrix is
  \[
  C = \sum_{k=0}^K \gamma_k \left[ (D + I)^{-1/2}(A + I)(D + I)^{-1/2} \right]^k,
  \]
  which yields the frequency response
  \[
  g(\lambda) \approx \sum_{k=0}^K \gamma_k \left(1 - \frac{\lambda \bar{p}}{\bar{p} + 1} \right)^k.
  \]

  \item \textbf{Highpass:} The convolution operator is the graph Laplacian \(C = L\), so the frequency response is simply
  \[
  g(\lambda) = \lambda.
  \]

  \item \textbf{FAGCN:} The convolution matrix is a weighted combination of low- and high-pass terms,
  \[
  C = \alpha((1+\epsilon)I - L) + (1 - \alpha)((\epsilon - 1)I + L),
  \]
  and the corresponding frequency response is
  \[
  g(\lambda) = \alpha((1+\epsilon) - \lambda) + (1 - \alpha)((\epsilon - 1) + \lambda).
  \]

    \item \textbf{GAT:} GAT computes hidden representations by applying attention weights to neighbors:
      \[
      x_i' = \sigma\!\left( \sum_{j \in \mathcal{N}(i)} \alpha_{ij} W x_j \right),
      \]
      where \(\alpha_{ij}\) are learned attention coefficients and \(W\) is a weight matrix.  
      While GAT is not defined in the spectral domain, we interpret it as a learnable frequency response on the eigenvalues, i.e.,
      \( g(\lambda) = \inf_{\substack{g \\ g: \lambda \to \tilde{\lambda}}} R \).
     \item \textbf{Specformer:} Specformer is a spectral GNN that overcomes the limitations of classical scalar-to-scalar filters. It takes the full set of eigenvalues \(\Lambda\) as input and applies self-attention to produce a transformed spectrum:
       For each attention head \(m\), Specformer applies self-attention in the spectral domain to generate new eigenvalue representations:
\[
Z_m = \mathrm{Attention}(Q W_m^Q,\, K W_m^K,\, V W_m^V),
\qquad 
\lambda_m = \phi(Z_m W_\lambda),
\]
where \(Q,K,V\) are query, key, and value matrices,  
\(W_m^Q, W_m^K, W_m^V\) are learnable parameters, \(Z_m\) is the representation produced by the \(m\)-th head, and \(\phi\) is a nonlinearity (e.g., ReLU or Tanh).  

The resulting filtered eigenvalues \(\lambda_m \in \mathbb{R}^{n \times 1}\) are used to reconstruct learnable bases:
\[
S_m = U \,\mathrm{diag}(\lambda_m) U^\top,
\]
with \(U\) denoting the eigenvector matrix of the graph Laplacian.  
These bases are then concatenated and passed through a feed-forward network to produce the combined basis,
\[
\hat{S} = \mathrm{FFN}\!\big([I_n \,\|\, S_1 \,\|\, \cdots \,\|\, S_M]\big).
\]
        In our analysis, we model this as a set-to-set frequency response at the matrix level:
       \[g(\Lambda) = \inf_{\substack{g \\ g: \Lambda \to \tilde{\Lambda}}} R \]
        capturing global spectral patterns that element-wise filters cannot.
  
\end{itemize}

\begin{table}[t!]
\centering
\caption{Convolution operators \( C \) and frequency responses \( g(\lambda) \) of selected GNNs.}
\resizebox{\textwidth}{!}{%
\begin{tabular}{@{}>{\raggedright\arraybackslash}p{2.8cm} >{\raggedright\arraybackslash}p{8cm} >{\raggedright\arraybackslash}p{6.5cm}@{}}
\toprule
\textbf{GNN} & \textbf{Convolution matrix \( C \)} & \textbf{Frequency response \( g(\lambda) \)} \\
\midrule
MLP & \( C = I \) & \( g(\lambda) = 1 \) \\[3pt]

GCN \citep{DBLP:conf/iclr/KipfW17} & \( C = 2I - L \) & \( g(\lambda) = 2(1 - \lambda/2) \) \\[3pt]

GIN$^a$\ \citep{DBLP:conf/iclr/XuHLJ19} & \( C = A + (1 + \epsilon)I \) & \( g(\lambda) \approx \bar{p} \left( \frac{1 + \epsilon}{\bar{p}} + 1 - \lambda \right) \) \\[3pt]

PPNP$^a$\ \citep{DBLP:conf/iclr/KlicperaBG19} & \( C = \alpha(I - (1-\alpha)(D + I)^{-0.5}(A + I)(D + I)^{-0.5})^{-1} \) & \( g(\lambda) \approx \alpha \left(1 - (1 - \alpha)\left(1 - \frac{\lambda \bar{p}}{\bar{p} + 1}\right)\right)^{-1} \) \\[3pt]

GPR-GNN \citep{DBLP:conf/iclr/ChienP0M21} & \( C = \sum_{k=0}^K \gamma_k \left[(D + I)^{-0.5}(A + I)(D + I)^{-0.5}\right]^k \) & \( g(\lambda) \approx \sum_{k=0}^K \gamma_k \left(1 - \frac{\lambda \bar{p}}{\bar{p} + 1} \right)^k \) \\[3pt]

Highpass \citep{DBLP:conf/acml/ZhangL23} & \( C = L \) & \( g(\lambda) = \lambda \) \\[3pt]

$[\text{High} \mid \text{Low}]$ \cite{DBLP:conf/sspr/AnsarizadehTTR20} &
\begin{tabular}[t]{@{}l@{}}
     \( C^{(1)} = I - L/2 \) \\
     \( C^{(2)} = L/2 \)
\end{tabular}
&
\begin{tabular}[t]{@{}l@{}}
     \( g_1(\lambda) = 1 - \lambda/2 \) \\
     \( g_2(\lambda) = \lambda/2 \)
\end{tabular} \\[3pt]

FAGCN \citep{DBLP:conf/aaai/BoWSS21} & \( C = \alpha((1+\epsilon)I - L) + (1 - \alpha)((\epsilon - 1)I + L) \) & \( g(\lambda) = \alpha((1+\epsilon) - \lambda) + (1 - \alpha)((\epsilon - 1) + \lambda) \) \\[3pt]

GAT \citep{DBLP:conf/iclr/VelickovicCCRLB18} & trainable convolution & \( g(\lambda) = \inf_{\substack{g \\ g: \lambda \to \tilde{\lambda}}} R \) \\[3pt]

Specformer \citep{DBLP:conf/iclr/BoSWL23} & trainable convolution & \( g(\Lambda) = \inf_{\substack{g \\ g: \Lambda \to \tilde{\Lambda}}} R \) \\[3pt]

GraphSAGE \citep{DBLP:conf/nips/HamiltonYL17} &
\begin{tabular}[t]{@{}l@{}}
     \( C^{(1)} = I \) \\
     \( C^{(2)} = D^{-1}A \)
\end{tabular}
&
\begin{tabular}[t]{@{}l@{}}
     \( g_1(\lambda) = 1 \) \\
     \( g_2(\lambda) = 1 - \lambda \)
\end{tabular} \\[3pt]

CayleyNet$^b$ \citep{DBLP:journals/tsp/LevieMBB19} &
\begin{tabular}[t]{@{}l@{}}
     \( C^{(1)} = I \) \\
     \( C^{(2r)} = \mathrm{Re}(\rho(hL)^r) \) \\
     \( C^{(2r+1)} = \mathrm{Re}(i \rho(hL)^r) \)
\end{tabular}
&
\begin{tabular}[t]{@{}l@{}}
     \( g_1(\lambda) = 1 \) \\
     \( g_{2r}(\lambda) = \cos(r \theta(h \lambda)) \) \\
     \( g_{2r+1}(\lambda) = -\sin(r \theta(h \lambda)) \)
\end{tabular} \\[3pt]

ChebNet \citep{DBLP:conf/nips/DefferrardBV16} &
\begin{tabular}[t]{@{}l@{}}
    \( C^{(1)} = I \) \\
    \( C^{(2)} = 2L/\lambda_{\max} - I \) \\
    \( C^{(s)} = 2C^{(2)}C^{(s-1)} - C^{(s-2)} \)
\end{tabular}
&
\begin{tabular}[t]{@{}l@{}}
    \( g_1(\lambda) = 1 \) \\
    \( g_2(\lambda) = 2\lambda/\lambda_{\max} - 1 \) \\
    \( g_s(\lambda) = 2g_2(\lambda)g_{s-1}(\lambda) - g_{s-2}(\lambda) \)
\end{tabular} \\[3pt]

ChebNetII \citep{DBLP:conf/nips/HeWW22} &
\begin{tabular}[t]{@{}l@{}}
    \( C^{(1)} = I \) \\
    \( C^{(2)} = 2L/\lambda_{\max} - I \) \\
    \( C^{(s)} = \left(2C^{(2)}C^{(s-1)} - C^{(s-2)}\right)/s \)
\end{tabular}
&
\begin{tabular}[t]{@{}l@{}}
    \( g_1(\lambda) = 1 \) \\
    \( g_2(\lambda) = 2\lambda/\lambda_{\max} - 1 \) \\
    \( g_s(\lambda) = \left(2g_2(\lambda)g_{s-1}(\lambda) - g_{s-2}(\lambda)\right)/s \)
\end{tabular} \\[3pt]

\bottomrule
\end{tabular}%
}

$^a$ \( \bar{p} \) is the average node degree in the graph. 
$^b$ \( \rho(x) = (x - iI) / (x + iI) \), \( \theta(h\lambda) = \arg(\rho(h\lambda)) \)

\label{tab:gnn_summary}
\end{table}

\subsection{Discussion on Future Work}
Our analysis focuses on architectures that can be expressed in a spectral filter representation (see Section 1). This requirement enables us to derive exact generalisation error bounds and cover a broad range of architectures. However, it also leads to approximations for certain models such as GIN [1], PPNP [2], and GPR-GNN [3] due to the assumption of approximate graph regularity.

All the architectures we consider are summarised in Table \ref{tab:gnn_summary}. We also include idealised versions of attention-based models, as stated in Definition 3.2. The actual constructions of these models and our rationale for the reasoning behind Definition 3.2 are also provided in Appendix~\ref{Ap:frequency_response}. A concrete investigation of the attention mechanism is left for future work.

\newpage
\section{Proof of Theorem~\ref{thm:aniso}}
\label{Ap:theo_proof}
We derive the generalisation error under the framework introduced in Section 2, following a spectral approach similar to \cite{NIPS2001_d68a1827}. Under the parameter prior in Assumption 3.2, the ground truth model ("teacher") and the GNN model ("student") are represented as:
\( \displaystyle
\frac{1}{n}XX^T = U {\Lambda_{\star}}^{1/2} \Phi^T \theta^{\star} {\theta^{\star}}^T \Phi {\Lambda_{\star}}^{1/2}U^T,  \hspace{1mm}
\frac{1}{n}HH^T = U \tilde{\Lambda} U^T,
\)
where $\Lambda_{\star}, \tilde{\Lambda}$ are diagonal matrices encoding the signal spectra of the teacher and student models, respectively. 
The generalisation error dataset is given as:
\begin{align*}
R_H = \mathbb{E}_{V,\epsilon, \theta^\star} \left[\frac{1}{n}  \sum_{i=1}^{n} \left((H \hat{\theta})_i - (X \theta^\star)_i\right)^2 \right].
\end{align*}

We begin with the closed-form solution of the ridge regression estimator:
\[
\hat{\theta} = \left(H_{\text{train}}^\top H_{\text{train}} + \sigma^2 I\right)^{\dagger} H_{\text{train}}^\top y.
\]
Introducing a diagonal matrix \(I_{\text{train}} \in \mathbb{R}^{n \times n}\), which has ones on the diagonal entries corresponding to the training nodes and zeros elsewhere, since \(I_{\text{train}}^2=I_{\text{train}}\) we can express \(\hat{\theta}\) as:
\[
\hat{\theta}= \left(H_{\text{train}}^\top H_{\text{train}} + \sigma^2 I\right)^{\dagger} H^\top I_{\text{train}} y.
\]
Using the expression \(y = X \theta^\star + \epsilon\), this becomes:
\[
\hat{\theta} = \left(H_{\text{train}}^\top H_{\text{train}} + \sigma^2 I\right)^{\dagger} H^\top \left(I_{\text{train}} X \theta^\star + I_{\text{train}} \epsilon\right).
\]

Under Assumption~3.1, we can write  
\(
H_{\text{train}}^\top H_{\text{train}} = \frac{n_{\text{train}}}{n} H^\top H,
\)
and,  
\(H_{\text{train}}^\top X_{\text{train}} = \frac{n_{\text{train}}}{n} H^\top X .
\)
(Details on Assumption~3.1 are in Appendix~\ref{Ap:covariance})
By substituting these expressions and reformulating the ridge regression estimator in its dual form, we obtain:
\[
\hat{\theta} = H^\top \left(H H^\top + c_{1} I \right)^{\dagger} (X\theta^\star+\epsilon_{\text{train}}),
\]
where \(c_{1}=\frac{n\sigma^2}{n_{\text{train}}}\) and \(\epsilon_{\text{train}}:=\frac{n}{n_{\text{train}}}I_{\text{train}}\epsilon\).
We substitute this expression for \(\hat{\theta}\) and express the prediction error for each node \(i\) as:
\begin{align*}
\left((H \hat{\theta})_i - (X \theta^\star)_i\right)^2 
&= \left(e_i^T H H^\top \left(H H^\top + c_{1} I \right)^{\dagger} (X \theta^\star + \epsilon_{\text{train}}) - e_i^T X \theta^\star \right)^2 \\
&= \left(e_i^T H H^\top \left(H H^\top + c_{1} I \right)^{\dagger} X \theta^\star - e_i^T X \theta^\star \right)^2 \\
&\quad + \left(e_i^T H H^\top \left(H H^\top + c_{1} I \right)^{\dagger} \epsilon_{\text{train}} \right)^2 \\
&\quad + 2 \left(e_i^T H H^\top \left(H H^\top + c_{1} I \right)^{\dagger} X \theta^\star - e_i^T X \theta^\star \right) \left(e_i^T H H^\top \left(H H^\top + c_{1} I \right)^{\dagger} \epsilon_{\text{train}} \right).
\end{align*}

Taking the expectation over \(\epsilon\), and using \(\mathbb{E}[\epsilon] = 0\), the cross term vanishes, yielding the bias-variance decomposition:
\begin{align*}
R_H &= \underbrace{\mathbb{E} \left[  \sum_{i=1}^{n} \left(e_i^T H H^\top \left(H H^\top + c_{1} I \right)^{\dagger} X \theta^\star - e_i^T X \theta^\star \right)^2 \right]}_{\text{Bias}} \\
&\quad + \underbrace{\mathbb{E} \left[  \sum_{i=1}^{n} \left(e_i^T H H^\top \left(H H^\top + c_{1} I \right)^{\dagger} \epsilon_{\text{train}} \right)^2 \right]}_{\text{Variance}}.
\end{align*}

To simplify the variance term, observe that 
\(
\mathbb{E}_{\epsilon}[\epsilon \epsilon^\top] = \sigma^2 I,
\)
and 
\(
\mathbb{E}_{V}\!\left[\tfrac{n}{n_{\text{train}}} I_{\text{train}}\right] = I.
\)
Combining these identities with the definition of \(\epsilon_{\text{train}}\) yields
\(
\mathbb{E}[\epsilon_{\text{train}} \epsilon_{\text{train}}^\top] 
= \frac{n}{n_{\text{train}}}\,\sigma^2 I \;=\; c_{1} I.
\)
Hence the variance term can be expressed as

\begin{align*}
&\mathbb{E}\left[ \sum_{i=1}^{n} \left(e_i^T H H^\top \left(H H^\top + c_{1} I \right)^{\dagger} \epsilon_{\text{train}} \right)^2 \right] 
\\&= \sum_{i=1}^{n} \mathbb{E}\left[ \epsilon_{\text{train}}^\top \left(H H^\top + c_{1} I \right)^{\dagger} H H^\top e_i e_i^\top H H^\top \left(H H^\top + c_{1} I \right)^{\dagger} \epsilon_{\text{train}} \right] \\
&= \mathbb{E}\left[ \epsilon_{\text{train}}^\top \left(H H^\top + c_{1} I \right)^{\dagger} H H^\top \left( \sum_{i=1}^n e_i e_i^\top \right) H H^\top \left(H H^\top + c_{1} I \right)^{\dagger} \epsilon_{\text{train}}\right] \\
&= \mathbb{E}\left[ \epsilon_{\text{train}}^\top \left(H H^\top + c_{1} I \right)^{\dagger} H H^\top H H^\top \left(H H^\top + c_{1} I \right)^{\dagger} \epsilon_{\text{train}} \right] \\
&= \operatorname{tr} \left( \left(H H^\top + c_{1} I \right)^{\dagger} H H^\top H H^\top \left(H H^\top + c_{1} I \right)^{\dagger} \mathbb{E}_{\epsilon,V}[\epsilon_{\text{train}} \epsilon_{\text{train}}^\top] \right) \\
&= c_{1} \operatorname{tr} \left( H H^\top \left(H H^\top + c_{1} I \right)^{-2} H H^\top \right).
\end{align*}

Using the decomposition \(\frac{1}{n} H H^\top = U \tilde{\Lambda} U^\top \), we get:
\begin{align*}
\text{Variance} 
&= c_{1} \operatorname{tr} \left(n U \tilde{\Lambda} U^\top \left( nU \tilde{\Lambda} U^\top + c_{1} I \right)^{-2} nU \tilde{\Lambda} U^\top \right) \\
&=  c_{1} \operatorname{tr} \left( \tilde{\Lambda}^2 \left( \tilde{\Lambda} + \frac{c_{1}}{n} I \right)^{-2} \right),
\end{align*}
where the final expression follows from the orthonormality of \( U \) and the matrix inversion Theorem (Theorem 5.15) from \cite{schott2016matrix}.

To simplify the bias term, we first expand the square:
\begin{align*}
&\mathbb{E}\left[ \sum_{i=1}^{n} \left(e_i^T \left( H H^\top \left(H H^\top + c_{1} I \right)^{\dagger} - I \right) X \theta^\star \right)^2 \right] \\
&= \mathbb{E}\left[\operatorname{tr} \left( \left( H H^\top \left(H H^\top + c_{1} I \right)^{\dagger} - I \right) X \theta^\star {\theta^\star}^\top  X^\top \left( H H^\top \left(H H^\top + c_{1} I \right)^{\dagger} - I \right)^\top \right)\right]
\end{align*}

To proceed, we apply the decompositions \( \frac{1}{n}H H^\top = U \tilde{\Lambda} U^\top \) and \(\frac{1}{n} X \theta^\star {\theta^\star}^\top X^\top = U \Lambda_\star^{1/2} \Phi^\top \theta^\star {\theta^\star}^\top \Phi \Lambda_\star^{1/2} U^\top \).

\begin{align*}
\text{Bias} 
&= \mathbb{E}\left[\operatorname{tr} \left( \left( nU \tilde{\Lambda} \left(n\tilde{\Lambda} + c_{1} I \right)^{-1} U^\top - I \right)n U \Lambda_{\star}^{1/2} \Phi^\top \theta^\star {\theta^\star}^\top \Phi \Lambda_{\star}^{1/2} U^\top \left(n U \tilde{\Lambda} \left(n\tilde{\Lambda} + c_{1} I \right)^{-1} U^\top - I \right)^\top \right)\right] \\
&= \mathbb{E}\left[\operatorname{tr} \left( n\Lambda_{\star}^{1/2} \Phi^\top \theta^\star {\theta^\star}^\top \Phi \Lambda_{\star}^{1/2} \left( n\tilde{\Lambda} \left(n\tilde{\Lambda} + c_{1} I \right)^{-1} - I \right)^2 \right)\right] \\
&= \mathbb{E}\left[\operatorname{tr} \left( n\Lambda_{\star}^{1/2} \Phi^\top \theta^\star {\theta^\star}^\top \Phi \Lambda_{\star}^{1/2} \left( \tilde{\Lambda} \left(\tilde{\Lambda} + \frac{c_{1}}{n} I \right)^{-1} - I \right)^2 \right)\right] \\
&= \operatorname{tr} \left( n\Lambda_{\star}\mathbb{E}_{\theta^\star}[ \Phi^\top \theta^\star {\theta^\star}^\top \Phi] \left( I - 2 \tilde{\Lambda} \left(\tilde{\Lambda} + c I \right)^{-1} + \tilde{\Lambda}^2 \left(\tilde{\Lambda} + c I \right)^{-2} \right) \right),
\end{align*}
where the final expression follows from the orthonormality of \( U \) and the matrix inversion Theorem (Theorem 5.15) from \cite{schott2016matrix}.

Note that under the parameter prior assumption (Assumption 3.2), the expectation \(\mathbb{E}_{\theta^\star}[\Phi^\top \theta^\star {\theta^\star}^\top \Phi] \) appearing next to 
\( \Lambda_{\star} \)  acts as a scaling. More precisely, it scales each eigenvalue \(\lambda_i^\star 
\text{ by the factor } \mathbb{E}_{\theta^\star}[\langle \phi_i, \theta^\star \rangle^2]\).

Finally, note that both the bias and variance terms have been expressed as traces of diagonal matrices, which correspond to the sums of their diagonal entries. Combining these two terms yields the total generalization error:
\begin{align*}
R_H = \sum_{i=1}^{d} \left( \frac{\tilde{\lambda}_i \cdot c}{\tilde{\lambda}_i +c } - \left(\tilde{\lambda}_i - \lambda_i^\star \cdot \mathbb{E} \langle \phi_i, \theta^\star \rangle^2\right) \cdot \frac{c^2}{(\tilde{\lambda}_i + c )^2} \right).
\end{align*}

This matches the expression in Theorem 3.1 and concludes the proof.

\section{Remark for Population Covariance Matrix Assumption}
\label{Ap:covariance}
In this section, we first discuss the intuition behind Assumption 3.1 by showing that it holds in expectation (\ref{Ap:covariance_expectation}). We then present concentration bounds for empirical covariance matrices, providing further justification that the assumption is reasonable (\ref{Ap:covariance_concentration}). Finally, we derive the generalisation error without relying on the Population Covariance Matrix Assumption, using the concentration bounds introduced in \ref{Ap:covariance_concentration} (\ref{Ap:covariance_generror}).

\subsection{Assumption Holds in Expectation}
\label{Ap:covariance_expectation}
If training nodes $V_{\text{train}}$ are selected uniformly at random, then 
    \[
    \tfrac{1}{n} X^\top X = \mathbb{E}_{V_{\text{train}}} \left[ \tfrac{1}{n_{\text{train}}} X_{\text{train}}^\top X_{\text{train}} \right],
    \]
    and the same also holds for the other quantities $\tfrac{1}{n} X^\top H$ and $\tfrac{1}{n} H^\top H$.

\begin{proof}

Since $\frac{1}{\sqrt{n}}X = U \Lambda_{\star}^{1/2} \Phi^T$ and 
$X_{\text{train}} = I_{\text{train}} X 
$ 
(also note that $I_{\text{train}}^\top I_{\text{train}}=I_{\text{train}}$),
\begin{align}
\frac{1}{n} X^\top X 
&=  \Phi \Lambda_{\star}^{1/2} U^\top U \Lambda_{\star}^{1/2} \Phi^\top \\
&=  \Phi \Lambda_{\star} \Phi^\top \\
\frac{1}{n_{\text{train}}} X_{\text{train}}^\top X_{\text{train}}
&= \frac{n}{n_{\text{train}}} \Phi \Lambda_{\star}^{1/2} 
U^\top  I_{\text{train}} U \Lambda_{\star}^{1/2} \Phi^\top
\end{align}

Since 
\[
\frac{1}{n} \Phi \Lambda_{\star} \Phi^\top 
= \mathbb{E}_{V_{\text{train}}} \left[\Phi \Lambda_{\star}^{1/2} 
\frac{1}{n_{\text{train}}} U^\top  I_{\text{train}} U 
\Lambda_{\star}^{1/2} \Phi ^\top\right],
\]
and both sides can be multiplied by 
$(\Lambda_{\star}^{1/2})^{\dagger}\Phi^T$ and 
$\Phi(\Lambda_{\star}^{1/2} )^{\dagger}$ to obtain
\[
\frac{1}{n} I = \mathbb{E}_{V_{\text{train}}} \left[ \frac{1}{n_{\text{train}}} U^\top  I_{\text{train}} U \right].
\]

Since $H$ lies in the same subspace as $X$ 
($\frac{1}{\sqrt{n}}H = U \tilde{\Lambda}^{1/2}\Phi^T$),
\begin{align*}
\mathbb{E} \left[ \frac{1}{n_{\text{train}}} 
X_{\text{train}}^\top H_{\text{train}} \right]
&= \mathbb{E} \left[ \frac{n}{n_{\text{train}}} 
\Phi \Lambda_{\star}^{1/2} U^\top I_{\text{train}} U 
\tilde{\Lambda}^{1/2} \Phi^\top \right] \\
&=  \Phi \Lambda_{\star}^{1/2} 
\mathbb{E} \left[\frac{n}{n_{\text{train}}} 
U^\top I_{\text{train}} U \right] 
\tilde{\Lambda}^{1/2} \Phi^\top \tag{1} \\
&= \frac{1}{n}X^\top H
\end{align*}
\end{proof}

\subsection{Concentration Bounds for Empirical Covariance Matrices}
\label{Ap:covariance_concentration}
 In the regime where both $n$ and $n_{\text{train}}$ grow proportionally, that is, $n/d \to \infty$ and $n_{\text{train}} = \alpha n$ for a fixed $\alpha \in (0,1)$ ( which is often the case in many benchmark experiments), then standard results on the concentration of sample covariance matrices apply, ensuring that the empirical covariance from a random training subset closely approximates that of the full dataset. Specifically, under standard assumptions (e.g., sub-Gaussian rows), it is known that the sample covariance concentrates around its expectation (see \citep{koltchinskii2017concentration} and used for generalisation error analysis in \citep{DBLP:journals/corr/abs-1906-11300}), and that the difference
    \[
    \left\| \tfrac{1}{n} X^\top X - \tfrac{1}{n_{\text{train}}} X_{\text{train}}^\top X_{\text{train}} \right\| \to 0 \quad \text{as } n \to \infty.
    \]
The following lemma  characterizes the concentration behavior of empirical covariance matrices more formally. In the discussion below, with high probability (w.h.p.) refers to any bound that holds with probability $1-\delta$ where $\delta$ would vanish as $n\to\infty$.

\begin{lemma}[Concentration of empirical covariance 
\citep{koltchinskii2017concentration,DBLP:journals/corr/abs-1906-11300}]
Let $X \in \mathbb{R}^{n \times d}$ have sub-Gaussian rows. Then the empirical covariance matrices from the full dataset and the training subset concentrate around each other. In particular, the difference 
\[
\eta_1 := \left\| \frac{1}{n} X^\top X - \frac{1}{n_{\text{train}}} X_{\text{train}}^\top X_{\text{train}} \right\| \leq C\left\| \mathbb{E} \left[\frac{1}{n}X^TX \right]\right\|\left(\sqrt{\frac{d}{n}}+\sqrt{\frac{d}{n_{train}}}\right)=  O\left(\sqrt{\frac{d}{n_{train}}}\right)\qquad w.h.p.
\]
\label{lemma:con}
\end{lemma}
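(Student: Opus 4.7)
The plan is to exploit the fact that, in our setting, $X$ is deterministic and the training indices $V_{\text{train}}$ form a uniformly random size-$n_{\text{train}}$ subset of $\{1,\ldots,n\}$. Then
\[
\frac{1}{n_{\text{train}}} X_{\text{train}}^\top X_{\text{train}} \;=\; \frac{1}{n_{\text{train}}} \sum_{i \in V_{\text{train}}} x_i x_i^\top,
\]
where $x_i$ denotes the $i$-th row of $X$. By the uniform sampling identity already used in Appendix~\ref{Ap:covariance} (point 1), the mean of this random matrix over the choice of $V_{\text{train}}$ is exactly $\frac{1}{n} X^\top X$. Hence $\eta_1$ is the spectral-norm deviation of a sampled sum of fixed rank-one matrices from its expectation, and everything reduces to matrix concentration.

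The main steps I would carry out are: (i) express $\eta_1$ as $\|\frac{1}{n_{\text{train}}}\sum_{i\in V_{\text{train}}} M_i - \frac{1}{n}\sum_{i=1}^n M_i\|$, with $M_i := x_i x_i^\top$ deterministic, symmetric and rank one; (ii) use the sub-Gaussian-row assumption to obtain a deterministic envelope $\max_i \|M_i\| \le R$ with high probability, where $R \asymp \mathrm{tr}(\Sigma) + \|\Sigma\|\log n$ with $\Sigma := \frac{1}{n}X^\top X$, via a uniform bound on $\|x_i\|_2^2$; and (iii) invoke a matrix Bernstein inequality for sampling without replacement (e.g.\ Tropp's matrix Chernoff / Gross--Nesme bound, or the sampling-without-replacement Bernstein of Mackey et al.) to conclude, with probability at least $1-\delta$,
\[
\eta_1 \;\le\; C\sqrt{\frac{\|\Sigma\|_{\mathrm{op}}\,R\,\log(d/\delta)}{n_{\text{train}}}} \;+\; \frac{C R\log(d/\delta)}{n_{\text{train}}}.
\]
Substituting $n_{\text{train}} = \alpha n$ and treating $d$ as fixed (as in the excerpt) gives $\eta_1 = O(1/\sqrt{n})$ with high probability, which is the claim. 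The constants in front can be tracked through $R$ and the matrix variance proxy $v := \|\mathbb{E}[(M_i - \Sigma)^2]\| \le \|\Sigma\|\cdot R$.

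The main obstacle is handling sampling \emph{without} replacement combined with the unboundedness of the sub-Gaussian rows. Standard i.i.d.\ matrix Bernstein does not apply directly, so one must either (a) use Tropp's matrix Chernoff / Hoeffding bound for negatively associated random matrices, or (b) construct a Doob martingale over the sampling sequence and apply a matrix Azuma inequality. Independently, sub-Gaussian rows yield only sub-exponential tails for $x_i x_i^\top$, so to get a deterministic bound $R$ one must first truncate on the event $\{\max_i \|x_i\|_2^2 \le R\}$, verify via a union bound and the sub-Gaussian norm of the rows that this event has probability $\ge 1 - \delta/2$, and then apply the concentration bound conditional on this event. Once these two technical points are handled, the claimed $O(1/\sqrt{n})$ rate follows directly from the matrix Bernstein estimate above.
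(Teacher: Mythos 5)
Your plan is sound, but it is worth noting that the paper does not actually prove Lemma~\ref{lemma:con} at all: in Appendix~\ref{Ap:covariance} the lemma is imported as a known result from the cited references, where the implicit argument is that (for i.i.d.\ sub-Gaussian rows) both $\tfrac{1}{n}X^\top X$ and $\tfrac{1}{n_{\text{train}}}X_{\text{train}}^\top X_{\text{train}}$ concentrate around the same population covariance, so their difference is $O(1/\sqrt{n_{\text{train}}}) = O(1/\sqrt{n})$ by the triangle inequality once $n_{\text{train}} = \alpha n$. Your route is genuinely different and arguably better matched to the paper's actual setting: since $X$ is deterministic (fixed design) and the only randomness is the uniformly random train/test split, you correctly recast $\eta_1$ as the deviation of a without-replacement sample mean of fixed rank-one matrices from its exact expectation (which is $\tfrac{1}{n}X^\top X$, as verified in point~1 of Appendix~\ref{Ap:covariance}), and then invoke a matrix Bernstein/Chernoff bound for sampling without replacement. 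This yields a self-contained proof that does not require the rows to be i.i.d.\ draws from a sub-Gaussian distribution, whereas the citation-based argument does; the price is that your envelope $R$ must be controlled, and here your plan wobbles slightly: if $X$ is deterministic, $\max_i \|x_i\|_2^2$ is a fixed number and no truncation event is needed (the ``sub-Gaussian rows'' hypothesis then only serves to guarantee $R = O(\operatorname{tr}\Sigma + \|\Sigma\|\log n)$ in the alternative reading where the rows are random). Either reading closes the argument with the claimed $O(1/\sqrt{n})$ rate up to logarithmic factors, so there is no gap, only a choice of which source of randomness carries the concentration.
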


\begin{corollary}\label{cor:covariances}
\label{cor:con}
If Lemma \ref{lemma:con} holds, then the following concentration bounds also hold with high probability:
\[
\eta_2 := \left\| \frac{1}{n} H^\top H - \frac{1}{n_{\text{train}}} H_{\text{train}}^\top H_{\text{train}} \right\| \leq O\left(\sqrt{\frac{d}{n_{train}}}\right), \qquad w.h.p.  \tag{2.1}
\]
\[
\eta_3 := \left\| \frac{1}{n} H^\top X - \frac{1}{n_{\text{train}}} H_{\text{train}}^\top X_{\text{train}}  \right\| \leq O\left(\sqrt{\frac{d}{n_{train}}}\right) \qquad w.h.p.   \tag{2.2}
\]
The difference of the pseudo-inverses
\[
\eta_4 := \left\| \left( \frac{1}{n_{\text{train}}} H_{\text{train}}^\top H_{\text{train}} \right)^\dagger - \left( \frac{1}{n} H^\top H \right)^\dagger \right\|.
\]
can be bounded using (2.1) and Theorem 3.3 of \citep{stewart1977perturbation} to give 
\[
\eta_4 \leq 3 \cdot \eta_2 \cdot \max\left\{\left\|\left(\frac{1}{n_{\text{train}}} H_{\text{train}}^\top H_{\text{train}}\right)^\dagger\right\| , \left\|\left(\frac{1}{n} H^\top H\right)^\dagger\right\|\right) \quad w.h.p.
\]
assuming \(\left\|\left(\frac{1}{n} H^\top H\right)^\dagger\right\|\) and \(\left\|\left(\frac{1}{n_{\text{train}}} H_{\text{train}}^\top H_{\text{train}}\right)^\dagger\right\|\) are bounded.
\end{corollary}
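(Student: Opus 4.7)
The plan is to leverage the shared spectral structure between $X$ and $H$ to reduce all three bounds to the single concentration already established in Lemma~C.1. Observe that under the parametrisations $X = U \Lambda_\star^{1/2} \Phi^\top$ and $H = U \tilde{\Lambda}^{1/2} \Phi^\top$, the difference between full and training empirical covariances is, in every case, a sandwich product around the single random matrix $M := \tfrac{1}{n_{\text{train}}} U^\top I_{\text{train}} U - \tfrac{1}{n} I$, whose expectation is zero under a uniformly random train/test split. Controlling $\|M\|$ is exactly what the proof of Lemma~C.1 delivers (via matrix Bernstein/Hoeffding applied to the sum of rank-one terms $U^\top e_i e_i^\top U$ for $i \in V_{\text{train}}$). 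So the strategy is to factor out $M$ in each of the three cases and push the bound through submultiplicativity.

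For $\eta_2$, I would write
\[
\frac{1}{n} H^\top H - \frac{1}{n_{\text{train}}} H_{\text{train}}^\top H_{\text{train}} \;=\; -\,\Phi\, \tilde{\Lambda}^{1/2}\, M\, \tilde{\Lambda}^{1/2}\, \Phi^\top,
\]
so that $\eta_2 \le \|\Phi\|^2 \,\|\tilde{\Lambda}\|\,\|M\|$. Since $\Phi$ has orthonormal columns and $\tilde{\Lambda} = g(\Lambda)^{2\ell} f(\Lambda)$ is uniformly bounded whenever $g,f$ are bounded on the spectrum of $L$, this reduces $\eta_2$ to $O(\|M\|) = O(1/\sqrt{n})$ w.h.p. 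For $\eta_3$, the identical factorisation
\[
\frac{1}{n} H^\top X - \frac{1}{n_{\text{train}}} H_{\text{train}}^\top X_{\text{train}} \;=\; -\,\Phi\, \tilde{\Lambda}^{1/2}\, M\, \Lambda_\star^{1/2}\, \Phi^\top
\]
gives $\eta_3 \le \|\tilde{\Lambda}\|^{1/2} \|\Lambda_\star\|^{1/2} \|M\|$, again $O(1/\sqrt{n})$ w.h.p. The $\eta_4$ bound then follows by invoking Theorem~3.3 of \citet{stewart1977perturbation} on pseudo-inverse perturbation, using the $\eta_2$ bound as the perturbation magnitude — the explicit inequality is already written out in the corollary statement.

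The main obstacle is the $\eta_4$ step: Stewart's bound scales inversely with the smallest \emph{nonzero} singular value of $\tfrac{1}{n} H^\top H$, which can be very small when $\tilde{\Lambda}$ has entries close to zero. This happens precisely in regimes of practical interest — low-pass filters acting on the high-frequency end of the spectrum, or deep architectures approaching oversmoothing where $g(\lambda)^{2\ell}\to 0$. The corollary sidesteps this by assuming $\lambda_{\min}(\tfrac{1}{n} H^\top H)$ is bounded away from zero, but a more refined version would need to track this dependence explicitly or restrict the bound to the effective rank of $H$. A secondary, milder issue is that Lemma~C.1 as stated assumes sub-Gaussian rows of $X$; what we actually use is the geometric concentration of the orthogonal projection $U^\top I_{\text{train}} U$ under uniform subsampling, which holds without any distributional assumption on features and should be flagged for clarity.
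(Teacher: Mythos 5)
Your proposal is correct and follows essentially the same route as the paper: both factor the covariance differences as a sandwich $\Phi\,\tilde{\Lambda}^{1/2} M\,\tilde{\Lambda}^{1/2}\Phi^\top$ (resp.\ $\Phi\,\tilde{\Lambda}^{1/2} M\,\Lambda_\star^{1/2}\Phi^\top$) around the common matrix $M = \tfrac{1}{n_{\text{train}}}U^\top I_{\text{train}}U - \tfrac{1}{n}I$, apply submultiplicativity, and close $\eta_4$ with Stewart's pseudo-inverse perturbation theorem. The one divergence is that the paper, taking Lemma~\ref{lemma:con} as a black box, recovers $\|M\| \le \eta_1/\lambda_{\min}(\Lambda_\star)$ by multiplying the $X$-covariance bound by $\Lambda_\star^{-1/2}$ on both sides (so its constants carry a $\lambda_{\max}(\tilde{\Lambda})/\lambda_{\min}(\Lambda_\star)$ factor and implicitly require $\Lambda_\star$ to be nondegenerate), whereas you bound $\|M\|$ directly from the underlying subsampling concentration --- your observation that this holds without distributional assumptions on the features, and your caveat about $\lambda_{\min}$ in the $\eta_4$ step, are both fair points the paper does not spell out.
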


\begin{proof}
We prove the bounds in Corollary~\ref{cor:covariances}.   
Recall the spectral decomposition $\frac{1}{\sqrt{n}}X = U \Lambda_{\star}^{1/2} \Phi^T$ and $X_{\text{train}} = I_{\text{train}} X = I_{\text{train}} \sqrt{n} U \Lambda_{\star}^{1/2}\Phi^ T$ (Also note that $I_{\text{train}}^\top I_{\text{train}}=I_{\text{train}}$)

It follows that
\begin{align}
\frac{1}{n} X^\top X 
&=  \Phi \Lambda_{\star}^{1/2} U^\top U \Lambda_{\star}^{1/2} \Phi^\top 
=  \Phi \Lambda_{\star} \Phi^\top, \\
\frac{1}{n_{\mathrm{train}}} X_{\mathrm{train}}^\top X_{\mathrm{train}} 
&= \frac{n}{n_{\mathrm{train}}} 
\Phi \Lambda_{\star}^{1/2} U^\top I_{\mathrm{train}} U \Lambda_{\star}^{1/2} \Phi^\top.
\end{align}

By the unitary invariance of the spectral norm and the orthogonality of $\Phi$, 
Lemma~\ref{lemma:con} yields
\[
\left\|
\frac{1}{n} X^\top X - \frac{1}{n_{\mathrm{train}}} X_{\mathrm{train}}^\top X_{\mathrm{train}}
\right\|
=
\left\|
 \Lambda_{\star} 
- \frac{n}{n_{\mathrm{train}}} \Lambda_{\star}^{1/2} U^\top I_{\mathrm{train}} U \Lambda_{\star}^{1/2}
\right\|
\leq \eta_1.
\]
Furthermore, by sub-multiplicativity of the spectral norm, we can bound
\[
\left\|
 I 
- \frac{n}{n_{\mathrm{train}}} U^\top I_{\mathrm{train}} U
\right\|
\leq 
\left\| \Lambda_{\star}^{-1/2} \right\| \,
\left\|
 \Lambda_{\star} 
- \frac{n}{n_{\mathrm{train}}} \Lambda_{\star}^{1/2} U^\top I_{\mathrm{train}} U \Lambda_{\star}^{1/2}
\right\| \,
\left\| \Lambda_{\star}^{-1/2} \right\|
\leq
 \frac{\eta_1}{\lambda_{\min}(\Lambda_{\star})},
\]
where $\lambda_{\min}(\Lambda_{\star})$ denotes smallest non-zero eigenvalue of $\Lambda_{\star}$.
It remains to bound the difference between the covariances of the hidden representations,
\begin{align}
\left\| \frac{1}{n} H^\top H - \frac{1}{n_{\mathrm{train}}} H_{\mathrm{train}}^\top H_{\mathrm{train}} \right\|
&= 
\left\|  \tilde{\Lambda} 
- \frac{n}{n_{\mathrm{train}}} \tilde{\Lambda}^{1/2} U^\top I_{\mathrm{train}} U \tilde{\Lambda}^{1/2} \right\| \nonumber \\
&\leq 
\left\| \tilde{\Lambda}^{1/2} \right\|
\left\|  I - \frac{n}{n_{\mathrm{train}}} U^\top I_{\mathrm{train}} U \right\|
\left\| \tilde{\Lambda}^{1/2} \right\| \nonumber \\
& \leq 
\frac{\lambda_{\max}(\tilde{\Lambda})}{\lambda_{\min}(\Lambda_\star)} \, \eta_1\nonumber \\
& = 
\frac{\lambda_{\max}(\frac{1}{n}H^TH)}{\lambda_{\min}(\frac{1}{n}X^TX)} \, \eta_1.
\nonumber 
\end{align}
And 
\begin{align}
\left\| \frac{1}{n} H^\top X - \frac{1}{n_{\mathrm{train}}} H_{\mathrm{train}}^\top X_{\mathrm{train}} \right\|
&= 
\left\|  \tilde{\Lambda}^{1/2}{\Lambda^{\star}}^{1/2}
- \frac{n}{n_{\mathrm{train}}} \tilde{\Lambda}^{1/2} U^\top I_{\mathrm{train}} U {\Lambda^{\star}}^{1/2} \right\| \nonumber \\
&\leq 
\left\| \tilde{\Lambda}^{1/2} \right\|
\left\|  I - \frac{n}{n_{\mathrm{train}}} U^\top I_{\mathrm{train}} U \right\|
\left\|{\Lambda^{\star}}^{1/2}\right\| \nonumber \\
& \leq 
\frac{\sqrt{\lambda_{\max}(\tilde{\Lambda})}\sqrt{\lambda_{\max}(\Lambda^{\star})}}{\lambda_{\min}(\Lambda_\star)} \, \eta_1
\nonumber \\
& =\frac{\lambda_{\max}(\frac{1}{n}H^TX)}{\lambda_{\min}(\frac{1}{n}X^TX)} \, \eta_1.
\nonumber 
\end{align}

\end{proof}

Consequently, the Assumption 2.1 holds in expectation, and the norm between these matrices is bounded which we use to provide \textbf{non-asymptotic bounds} that shows the gap between the correct generalisation error and the idealised one $R_H$ under the Assumption 3.1. 
\subsection{Generalisation Error Bounds without the Population Covariance Assumption}
\label{Ap:covariance_generror}
Building on Lemma \ref{lemma:con} and Corollary \ref{cor:con}, we provide the concentration bounds. 

Without the updated Assumption 3.1, the generalisation error should be
$$
R' := \frac{1}{n}\left\| H\theta_1 - X\theta^\star \right\|^2 \quad \text{where} \quad 
\theta_1 = \left( H_{\text{train}}^\top H_{\text{train}} +\sigma^2 I\right)^+ H_{\text{train}}^\top (X_{\text{train}} \theta^\star +\epsilon)
$$

We can rewrite $R'$ as $\frac{1}{n}\left\| H\theta_1 - H\theta_2 + H\theta_2 - X\theta^\star \right\|^2$, where $\theta_2 = \left( H^\top H +\frac{n\sigma^2}{n_{\mathrm{train}}}I  \right)^+ H^\top (X \theta^\star +\frac{n}{n_{\mathrm{train}}}I_{\mathrm{train}}\epsilon)$ 

Expanding the expression, we obtain 
$$
R' = 
\frac{1}{n}(\theta_1 - \theta_2)^\top H^\top H (\theta_1 - \theta_2) + \frac{2}{n}(\theta_1 - \theta_2)^\top H^\top (H\theta_2 - X\theta^\star) 
+\frac{1}{n} \left\| H\theta_2 - X\theta^\star \right\|^2.
$$

The last term ($\frac{1}{n}\left\| H\theta_2 - X\theta^\star \right\|^2$) is the generalisation error that we analysed in the paper and denoted by $R_H$. 

In the following, we show that the remaining terms are negligible given Corollary \ref{cor:con}, where $\eta_1, \eta_2, \eta_3$ are small and tend to zero with large $n$ (when $n_{train}=\alpha n$, for a fixed $\alpha\in (0,1)$). 
Define
\[
\psi_1 := \left( H_{\text{train}}^\top H_{\text{train}} +\sigma^2 I\right)^{\dagger} H_{\text{train}}^\top 
X_{\text{train}}  , 
\qquad
\psi_2 := \left( H^\top H +\frac{n\sigma^2}{n_{\mathrm{train}}}I\right)^{\dagger} H^\top 
 X  ,
\]
and set
\[
\gamma_1 := \left( H_{\text{train}}^\top H_{\text{train}} +\sigma^2 I \right)^{\dagger} H_{\text{train}}^\top,
\qquad
\gamma_2 := \left( H^\top H +\frac{n\sigma^2}{n_{\mathrm{train}}}I\right)^{\dagger} H^\top \frac{n}{n_{\mathrm{train}}}I_{\mathrm{train}}.
\]

The first term:
\begin{align*}
&\frac{1}{n}(\theta_1 - \theta_2)^\top H^\top H (\theta_1 - \theta_2)\leq \left\|\frac{1}{n} H^\top H \right\|  \left\| \psi_1 - \psi_2 \right\|^2 \left\| \theta^\star \right\|^2  + \frac{1}{n}\left\| H\gamma_1 - H\gamma_2 \right\|^2  \sigma^2
\end{align*}

\begin{align*}
&\left\|\frac{1}{n} H^\top H \right\|  \left\| \psi_1 - \psi_2 \right\|^2 \left\| \theta^\star \right\|^2 \\
&\leq \left\|\frac{1}{n} H^\top H \right\|  \left\| \left( \frac{1}{n_{\text{train}}}H_{\text{train}}^\top H_{\text{train}}+\frac{\sigma^2}{n_{train}} I \right)^\dagger \frac{1}{n_{\text{train}}}H_{\text{train}}^\top X_{\text{train}} - \left(\frac{1}{n} H^\top H +\frac{\sigma^2}{n_{\mathrm{train}}}I \right)^\dagger \frac{1}{n}H^\top X \right\|^2 \left\| \theta^\star \right\|^2\\
&\leq \left\|\frac{1}{n} H^\top H \right\|  \bigg(\left\| \left(\frac{1}{n_{\text{train}}} H_{\text{train}}^\top H_{\text{train}}+\frac{\sigma^2}{n_{train}} I \right)^\dagger \left(\frac{1}{n_{\text{train}}} H_{\text{train}}^\top X_{\text{train}} - \frac{1}{n}H^\top X \right) \right\| \\&+ \left\| \left( \left(\frac{1}{n_{\text{train}}} H_{\text{train}}^\top H_{\text{train}}+\frac{\sigma^2}{n_{train}} I \right)^\dagger - \left( \frac{1}{n}H^\top H +\frac{\sigma^2}{n_{\mathrm{train}}}I\right)^\dagger \right) \frac{1}{n} H^\top X \right\|\bigg)^2\left\| \theta^\star \right\|^2\\
&\leq \left\|\frac{1}{n} H^\top H \right\|  \left(\lambda_{\max}\left( (\frac{1}{n_{\text{train}}} H_{\text{train}}^\top H_{\text{train}}+\frac{\sigma^2}{n_{train}} I)^\dagger \right) \eta_3 + \eta_4 \, \lambda_{\max} \left( \frac{1}{n} H^\top X \right)\right)^2\left\| \theta^\star \right\|^2\\
&\leq O(\frac{1}{n}).
\end{align*}
assuming spectral norm of $\frac{1}{n}H^TH$, $(\frac{1}{n_{\text{train}}} H_{\text{train}}^\top H_{\text{train}}+\sigma^2 I)^\dagger$ and $ \frac{1}{n} H^\top X$  are bounded.

Next consider $\left\| H\gamma_1 - H\gamma_2 \right\|$:
\begin{align*}
&\left\| H\gamma_1 - H\gamma_2 \right\| \\
&=\left\| H(\frac{1}{n_{train}}H_{train}^TH_{train}+\frac{\sigma^2}{n_{train}}I)^{\dagger}H^T\frac{1}{n_{\mathrm{train}}}I_{\mathrm{train}} -H(\frac{1}{n}H^TH+\frac{\sigma^2}{n_{\mathrm{train}}}I)^{\dagger} H^T\frac{1}{n_{\mathrm{train}}}I_{\mathrm{train}}\right\|\\
&\leq \left\| H \left((\frac{1}{n_{train}}H_{train}^TH_{train}+\frac{\sigma^2}{n_{train}}I)^{\dagger} -(\frac{1}{n}H^TH+\frac{\sigma^2}{n_{\mathrm{train}}}I)^{\dagger}\right) H^T\frac{1}{n_{\mathrm{train}}}I_{\mathrm{train}}\right\| \\
& \leq \eta_4 \lambda_{max}(\frac{1}{n_{\mathrm{train}}}H_{\text{train}}^TH_{\text{train}})
\end{align*}

Putting them together, the first term $\frac{1}{n}(\theta_1 - \theta_2)^\top H^\top H (\theta_1 - \theta_2)\leq O(\frac{1}{n})+O(\frac{1}{n^2})=O(\frac{1}{n})$

For the second term, we observe that
$$
\begin{array}{l}
\displaystyle
\frac{2}{n} (\theta_1 - \theta_2)^\top H^\top (H \theta_2 - X \theta^\star) 
\quad \leq 2\left\| \theta_1 - \theta_2 \right\| \left\|\frac{1}{\sqrt{n}}H \right\| \left\| \frac{1}{\sqrt{n}} (H \theta_2 - X \theta^\star) \right\| \\[1.5ex]
\leq O(\frac{1}{\sqrt{n}}).
\end{array}
$$
since $\left\| \theta_1 - \theta_2 \right\| \leq O(\frac{1}{\sqrt{n}}) $,  $\left\|\frac{1}{\sqrt{n}}H \right\| \leq O(1)$ and $ \left\| \frac{1}{\sqrt{n}} (H \theta_2 - X \theta^\star) \right\| \leq O(1)$ with the same arguments.

Using above, we can bound the deviation between the generalisation errors w.h.p. as
$$
\begin{array}{l}
|R' - R_H|
= O\left(\frac{1}{\sqrt{n}}\right).
\end{array}
$$

We acknowledge that these bounds require assumptions that $d$ is small and the eigenvalues of the sample covariance matrices are bounded to prevent any terms from diverging. We keep the discussions in the paper in terms of $R_H$ instead of $R'$ since $R_H$ provides a more exact computation that can be analysed in specific settings.

\section{Proof of Corollary~\ref{gen-err}}
\label{Ap:cor41}
We start from the general expression for the generalisation error \(R_H\) given in Theorem 3.1 under the framework in Section 2 and Assumption 3.1. 

By substituting the isotropic parameter prior condition \( \mathbb{E}[\theta^\star \theta^{\star\top}] = I \) into the expression of \(R_H\), and noting that this implies \( \mathbb{E}\langle \phi_i, \theta^\star \rangle^2 = 1 \) due to the orthogonality of \(\Phi\), the summation terms simplify accordingly.

Specifically, the generalisation error decomposes into a sum over eigenvalues indexed by \(i\), and each summand depends only on the corresponding eigenvalue \(\tilde{\lambda}_i\) of the learned representation \(H H^\top\).

To analyse the behaviour of each summand, we consider the function
\[
r(\tilde{\lambda}_i) = 
\frac{\tilde{\lambda}_i \cdot c}{\tilde{\lambda}_i + c} 
- \left(\tilde{\lambda}_i - \lambda^\star_i\right) \frac{c^2}{\left(\tilde{\lambda}_i + c\right)^2},
\]
which represents the contribution of the \(i\)-th eigenvalue to \(R_H\). 

Our goal is to find the minimizer \(\tilde{\lambda}_i\) of this function for each \(i\). 

 Computing the derivative of \(r\) with respect to \(\tilde{\lambda}_i\), we get:
\begin{align*}
r'(\tilde{\lambda}_i) 
&= \frac{c(\tilde{\lambda}_i + c) - \tilde{\lambda}_i c}{(\tilde{\lambda}_i + c)^2}
- \left[ \frac{c^2(\tilde{\lambda}_i + c) - 2(\tilde{\lambda}_i - \lambda^\star_i)c^2}{(\tilde{\lambda}_i + c)^3} \right] \\
&= 2 \left( \tilde{\lambda}_i - \lambda^\star_i \right) \cdot \frac{c^2}{(\tilde{\lambda}_i + c)^3}.
\end{align*}

Setting \( r'(\tilde{\lambda}_i) = 0 \) to find critical points, we conclude that the minimum occurs at
\[
\tilde{\lambda}_i = \lambda^\star_i.
\]

This statement holds for all \(i\), establishing that the generalisation error is minimised when \(\tilde{\lambda}_i = \lambda^\star_i\) for each \(i\), which completes the proof of Corollary 4.1.

\section{Proof of Corollary~\ref{Cor:derivative}}
\label{Ap:cor42}
\label{Ap:Derivative}
Under the setting described in Section 2, let the filter \( g(\Lambda) \) be normalised to lie in \([0,1]\). We define \( a = \frac{\lambda_i}{2} \) and \( c = \frac{n\sigma^2}{n_{\text{train}}} \), where \( \lambda_i \) are the eigenvalues of the graph Laplacian and \( n_{\text{train}} \) is the number of training samples. Then, for a single-layer linear GCN, the derivative of the generalisation error with respect to the homophily parameter \( q \in [0,1] \) is given by:
\begin{align*}
&\frac{dR_{\text{GCN}}}{dq}(a=\frac{\lambda_{i}}{2}) =\\ 
&\frac{(-1 + 2a)c^2 \left(-c + a^2(6 - 23q) + a^4(8 - 18q) - q + a^5(-2 + 4q) + a(-1 + 8q) + a^3(-11 + 30q)\right)}
{\left(a + c + a^3(1 - 2q) + q - 4aq + a^2(-2 + 5q)\right)^3}.
\end{align*}

The derivative can also be obtained using the mathematical solver with the query:
\[
\text{D}\left[\frac{c(1-a)^2(q-(2q-1)a)}{(1-a)^2(q-(2q-1)a)+c} + \frac{(1-(1-a)^2)c^2(q-(2q-1)a)}{(1-a)^2(q-(2q-1)a)+c)^2}, q \right].
\]

Lastly, \(\displaystyle \frac{dR_{\mathrm{GCN}}}{dq} = \sum_{i}\frac{dR_{\mathrm{GCN}}(a=\frac{\lambda_i}{2})}{dq} \).
To get the results in Remark~\ref{Cor:Derivative}, we look at the sign of the term: \(\displaystyle \frac{dR_{\mathrm{GCN}}(a=\frac{\lambda_i}{2})}{dq} + \frac{dR_{\mathrm{GCN}}(a=\frac{\lambda_{max}-\lambda_i}{2})}{dq}\). Since the spectrum is symmetric, each term \( \frac{dR_{\mathrm{GCN}}}{dq}(a) \) is paired with \( \frac{dR_{\mathrm{GCN}}}{dq}(1 - a) \); if the sum of each such pair is negative for all \( 0.5 < a \leq 1 \), then the total derivative---being the sum of these negative pairs---is also negative. This is the statement of the first part of the Remark~\ref{Cor:Derivative} and can be obtained from the solver with the following query:
\[\text{Reduce[}\frac{dR_{\mathrm{GCN}}(a)}{dq} + \frac{dR_{\mathrm{GCN}}(1-a)}{dq} < 0 \text{ \&\& } 0.5 < a < 1\text{ \&\& } 0 < q < 1\text{ \&\& }\ c > 0.1 \text{]}
\]
Similarly second part of the Remark~\ref{Cor:Derivative} can be obtained from the solver with the queries like:
\begin{align*}
&\text{Reduce[}\frac{dR_{\mathrm{GCN}}(a)}{dq} + \frac{dR_{\mathrm{GCN}}(1-a)}{dq} > 0 \\& \text{ \&\& } c > 0 \text{ \&\& } a < 1 \text{ \&\& } q < 1 \text{ \&\& } q > 0 \text{ \&\& } a > 0.5 \text{ \&\& } c < 0.01\text{]}
\end{align*}
\(\text{Varying the intervals of } c \text{ and } a \text{ provides insight into how } 2 - \lambda_{\max} \text{ depends on } c, \) and how their relationship changes the risk under different levels of homophily.

\section{Proof of Corollary~\ref{cor:gat_specformer}}
\label{Ap:gat_specformer}

We begin by recalling the spectral interpretation of GAT and Specformer.
\paragraph{Spectral View of GAT and Specformer}
In the idealised setting, GAT applies a frequency response \( g: \lambda \mapsto \tilde{\lambda} \) to individual eigenvalues, while Specformer applies \( g: \Lambda \mapsto \tilde{\Lambda} \) to the full spectrum. Their generalisation errors are given by
\[
R_{\text{GAT}} = \inf_{g: \lambda \mapsto \tilde{\lambda}} R_H,
\quad
R_{\text{SF}} = \inf_{g: \Lambda \mapsto \tilde{\Lambda}} R_H.
\]

Let \( C = \left\{ \mathcal{I} \subseteq [d] \,|\, |\mathcal{I}| \geq 2,\ \lambda_i^\star = \lambda_j^\star \text{ and } \mathbb{E}\langle \theta^\star, \phi_i \rangle \neq \mathbb{E}\langle \theta^\star, \phi_j \rangle \forall i, j \in \mathcal{I} \right\}
\) be the collection of index sets of repeated eigenvalues in \(X\) with different alignment of \(\theta^\star\). 

For every \(\mathcal{I}  \in C\), GAT minimizes 
\begin{equation}
\label{Eq:gat}
|\mathcal{I}|\frac{\tilde\lambda_{i}c}{\tilde\lambda_{i}+c}-c^2\frac{|\mathcal{I}|\tilde\lambda_{i}-\sum_{j\in\mathcal{I}}\lambda^{\star}_j\mathbb{E}\langle \theta^\star, \phi_j \rangle^2}{(\tilde\lambda_{i}+c)^2} \text{, for all } i \in \mathcal{I} \end{equation}
over \(\tilde\lambda_{i}\). Hence \[
g_{\text{GAT}}(\lambda_{i})=\frac{1}{|\mathcal{I}|}\sum_{j\in\mathcal{I}}\lambda^{\star}_j\mathbb{E}\langle \theta^\star, \phi_j \rangle^2 \text{, for all } i \in \mathcal{I}.
\]
Substituting \(g_{\text{GAT}}(\lambda_{i})\) in Equation \ref{Eq:gat} gives the error of GAT for every \(\mathcal{I}  \in C\): \[\frac{ \left( \sum_{i \in \mathcal{I}} \lambda_i^\star  \mathbb{E}\langle \theta^\star, \phi_i \rangle^2 \right) c }
{ \left( \frac{1}{|\mathcal{I}|} \sum_{i \in \mathcal{I}} \lambda_i^\star \mathbb{E}\langle \theta^\star, \phi_i \rangle^2 \right) + c}.\]
Specformer achieves the minimal possible error for each eigenvalue, namely:
\[ \frac{ \lambda_i^\star  \mathbb{E}\langle \theta^\star, \phi_i \rangle^2  c }
{ \lambda_i^\star  \mathbb{E}\langle \theta^\star, \phi_i \rangle^2 + c }.\]

Note that for \(\mathcal{I} \notin C\), the error of GAT and Specformer does not differ.
Consequently, the generalisation error gap between GAT and Specformer arises only from the sets \( \mathcal{I} \in C \), and is given by
\[
 R_{\mathrm{GAT}} - R_{\mathrm{SF}} =
\sum_{\substack{\mathcal{I} \in C}} 
\left[
\frac{ \left( \sum_{i \in \mathcal{I}} \lambda_i^\star  \mathbb{E}\langle \theta^\star, \phi_i \rangle^2 \right)c }
{ \left( \frac{1}{|\mathcal{I}|} \sum_{i \in \mathcal{I}} \lambda_i^\star  \mathbb{E}\langle \theta^\star, \phi_i \rangle^2 \right) + c }
- \sum_{i \in \mathcal{I}} \frac{ \lambda_i^\star  \mathbb{E}\langle \theta^\star, \phi_i \rangle^2 c }
{ \lambda_i^\star  \mathbb{E}\langle \theta^\star, \phi_i \rangle^2 +c }
\right],
\]
which concludes the proof. 

\section{Experimental Details}
\label{Ap:Experiments}
All code and configurations used for the experiments are available at the following link:
\url{https://figshare.com/s/61f8fafb9469750f173e}
\paragraph{Hardware.} All experiments were conducted on a personal machine equipped with a 12th Gen Intel Core i7-12700H processor, 16 GB RAM, and an NVIDIA GeForce RTX 3050 GPU with 4 GB GDDR6 memory. The experiments were implemented in Python using PyTorch Geometric and executed under Windows 11.

\paragraph{Real-World Datasets.} 
We conduct experiments on six real-world node classification datasets commonly used in the literature: Cora \citep{10.5555/295240.295725}, Citeseer \citep{DBLP:conf/dl/GilesBL98}, Wikipedia (Wikipedia II from \cite{DBLP:journals/tnn/QianERPB22}), Squirrel \citep{DBLP:journals/compnet/RozemberczkiAS21} and Chameleon \citep{DBLP:journals/compnet/RozemberczkiAS21}. 
We load Chameleon and Squirrel using \texttt{Planetoid} classes from PyTorch Geometric and use the Geom-GCN versions. Other datasets, Cora, Citeseer and Wikipedia, are loaded using their respective open-source data files. All datasets are treated as undirected graphs. 

\subsection{Misalignment and Performance on Real-World Datasets}
We evaluate the performance of two architectures—graph convolution (denoted $SZ$) and concatenation ($[S\ Z]$)—on six standard node classification datasets. For each dataset, we compute the misalignment score defined in Definition~\ref{def:Misalignment} using $H = SZ$. The normalized misalignment is given by
\(\displaystyle
\frac{\mathrm{Tr}((I - P_H)XX^\top)}{\mathrm{Tr}(XX^\top)}
\), where $X$ is the one-hot encoded label matrix.
\textbf{The results are presented in Figure 2 of the main paper.}
We implement graph convolution (GCN) and the concatenation model with one layer of size $64$, ReLU activation, dropout rate $0.5$, and the Adam optimizer with learning rate $0.05$ and no weight decay. Each model is trained for up to $1000$ epochs with early stopping based on validation loss and a patience of $200$ epochs.

All results are averaged over $10$ random data splits. For each split, we sample $5\%$ of the nodes uniformly at random for training, and divide the remaining nodes equally between validation and test sets. The splits are generated independently of the graph structure, features, or labels. We use feature matrices without self-loops or feature normalisation to ensure that the representation space remains unaffected by preprocessing, allowing a faithful evaluation of misalignment.

\subsection{Theoretical Error Across Architectures vs. Homophily Parameter}
We analyse the theoretical error in Theorem 3.1 of several GNN architectures under varying homophily, parameterised by \( q \in [0, 1] \) using Equation 6. The error is computed with \( n = 100 \) and  \( \frac{\sigma^2 n}{n_{\text{train}}}  = 1 \). We model spectral properties using uniformly spaced \(n\) eigenvalues \( \lambda_i \in [0, 2] \). The frequency responses are normalised between 0 and 1. The plots for ChebNet I and ChebNet II, PPNP and GPR-GNN are presented in \textbf{Figure~3 (right plots in (a) and (b), respectively,)} in the main paper and for the other architectures in \textbf{Figure~\ref{fig:all_models_risk_analysis}} in Appendix~\ref{Ap:homo_exp} .

\subsection{Theoretical Error of GCN vs. Homophily Parameter}

We analyse the behaviour of the theoretical GCN error as a function of the homophily parameter \( q \) on the \textsc{Cora} and \textsc{Squirrel} datasets. \textbf{This experiment corresponds to Figure 4 in the main paper and \textbf{Figure~\ref{fig:gcn-risk-appendix}} in the appendix.}
We compute the eigenvalues of the normalised graph Laplacian. The theoretical error is then evaluated using Theorem 3.1 and Equation 6, which depends on the spectrum of the Laplacian, the homophily parameter \( q \in [0, 1] \), and a noise-to-signal parameter \( c = \frac{n\sigma^2}{n_{\text{train}}} \).
In \textbf{Figure 4}, we fix \( c = 0.01 \) and plot \( R_{\text{GCN}} \) for both datasets over a grid of 100 equally spaced values of \( q \). To illustrate the effect of \(c\), we include additional plots for \( c \in \{0.1, 0.01, 0.001, 0.0001\} \) in Figure~\ref{fig:gcn-risk-appendix}.

\subsection{Generalisation Performance under Heterophily}
\label{sec:heterophilic-perturbations}
\textbf{Figure 5 in the main paper} studies how the test error of GCN is affected when heterophilic edges are introduced to the Cora graph. We generate a new adjacency matrix for each setting by adding a fixed number of heterophilic edges to the original graph. A new edge $(i,j)$ is heterophilic if the labels $y_i$ and $y_j$ differ. These edges are sampled in a label-aware manner: for a node $i$ with label $y_i$, we draw a target label $c \ne y_i$ from the average label distribution in the neighbourhoods of nodes with label $y_i$, excluding self-class neighbors, and then sample a node $j$ with label $c$ uniformly at random. This ensures that the new edges connect different classes in a way that reflects the natural class distribution in the dataset.

To measure the impact of the heterophilic edges, we compute the homophily ratio as the fraction of edges that connect nodes with identical labels, as standard in the literature~\citep{DBLP:conf/nips/ZhuYZHAK20}. We compute the GCN accuracy over 50 independent runs for each new adjacency matrix and report the mean and standard deviation. We also track the maximum eigenvalue of the symmetric normalised Laplacian.

We use a 2-layer GCN with hidden dimension $64$, dropout rate $0.5$, learning rate $0.05$, and no weight decay. Training is performed with early stopping on a validation set using patience of $100$ epochs and a maximum of $1000$ epochs. The perturbation level varies from $0$ to $15\,000$ additional edges in increments of $1500$.

This experiment provides empirical support for the theoretical link between the spectral properties of the graph and GCN performance, and shows that GCNs may still generalise well under heterophily if the spectral conditions are favourable.

\subsection{Performance of GAT and Specformer under Repeated Eigenvalues in Synthetic Data}
 \label{sec:repeated-eigenvalues}
To empirically validate Corollary 4.3, we compare the performance of Specformer and GAT on a synthetic graph sequence designed to exhibit increasing eigenvalue multiplicity. The graphs are built by appending disjoint cycle graphs, where all nodes in each cycle are assigned the same class label. Each added block increases the number of distinct classes. Since cycle graphs have distinct Laplacian eigenvalues, replicating the same block structure across the graph increases the multiplicity of each eigenvalue. As shown in \textbf{Figure 6 in the main paper}, Specformer consistently achieves high accuracy, while the accuracy of GAT degrades as the number of repeated eigenvalues increases.
Each synthetic graph is constructed from $n_\text{blocks} \in \{1, \dots, 9\}$ disjoint cycle graphs of size $80$ nodes, resulting in a total of $80 \cdot n_\text{blocks}$ nodes. This block size was chosen to ensure a sufficiently large graph for training and evaluation. The adjacency matrix is constructed using the block-diagonal composition of the individual cycle graphs. Each node is assigned a $16$-dimensional feature vector sampled i.i.d. from a standard Gaussian distribution. This choice of random features prevents the task from becoming trivial due to informative features. Labels are assigned so that all nodes within a cycle graph share a single class, with one unique class per block.
The nodes are split randomly into training, validation, and test sets for each graph using a $60$-$20$-$20$ split. The same split is used for both models to ensure comparability. We train Specformer and GAT with two layers, hidden size $64$ and early stopping based on validation loss with patience of $200$ epochs and a maximum of $1000$ epochs. The Adam optimizer is used with a learning rate of $0.001$ and weight decay of $0.0005$. Each experiment is repeated for $50$ runs.
For each configuration, we compute the number of repeated eigenvalues in the adjacency matrix and report the mean and standard deviation of the test accuracy across runs.

\section{Additional Experiments of Section 4.2}
\label{Ap:homo_exp}
This appendix complements Section 4.2 by providing additional plots that support our theoretical analysis for a broad range of GNNs.

Figure \ref{fig:all_models_risk_analysis} shows how risk varies with respect to the homophily parameter \( q \). For this analysis, we consider a uniform eigenvalue distribution and  \(c=1\). Notably, the risk of GCNs decreases with increasing q, reflecting their low-pass nature, while high-pass filters exhibit the opposite trend, performing better in the heterophilic regime. For GIN \citep{DBLP:conf/iclr/XuHLJ19}, we observe that the risk depends on the hyperparameter \( \epsilon \), allowing the model to switch between high-pass and low-pass behaviors. ChebNet \citep{DBLP:conf/nips/DefferrardBV16}, as well as a concatenation of high-pass and low-pass filters (denoted \( [ \text{High} \hspace{2mm} \text{Low}] \) in \cite{DBLP:conf/sspr/AnsarizadehTTR20}), can handle strongly homophilic or heterophilic graphs but struggle in the intermediate range. As discussed earlier, GPR-GNN outperforms PPNP in handling heterophilic graphs, and it also exhibits a lower risk in general. Investigating the generalisation error of GNNs from a filtering perspective allows us to characterize the suitability of a particular GNN for homophily or heterophily. GNNs that have low-pass filter characteristics are better suited for homophilic graphs, while they struggle with heterophilic graphs, where high-frequency components dominate. This analysis provides valuable insights into the limitations of various GNN architectures in relation to the spectral properties of the graph data.

\begin{figure}[]
\centering
\includegraphics[width=\textwidth]{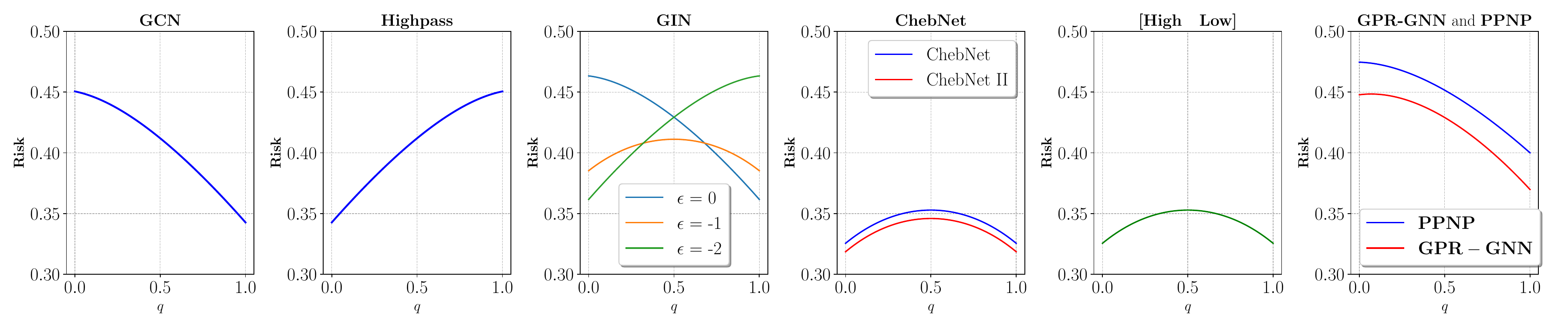}
\caption{Theoretical error over \( q \), averaged over eigenvalues, for different models including GCN, Highpass, GIN, ChebNet, ChebNet II, \([\text{High}\ \text{Low}]\), PPNP, and GPR-GNN.}
\label{fig:all_models_risk_analysis}
\end{figure}

\begin{figure}[]
\centering
\includegraphics[width=0.6\textwidth]{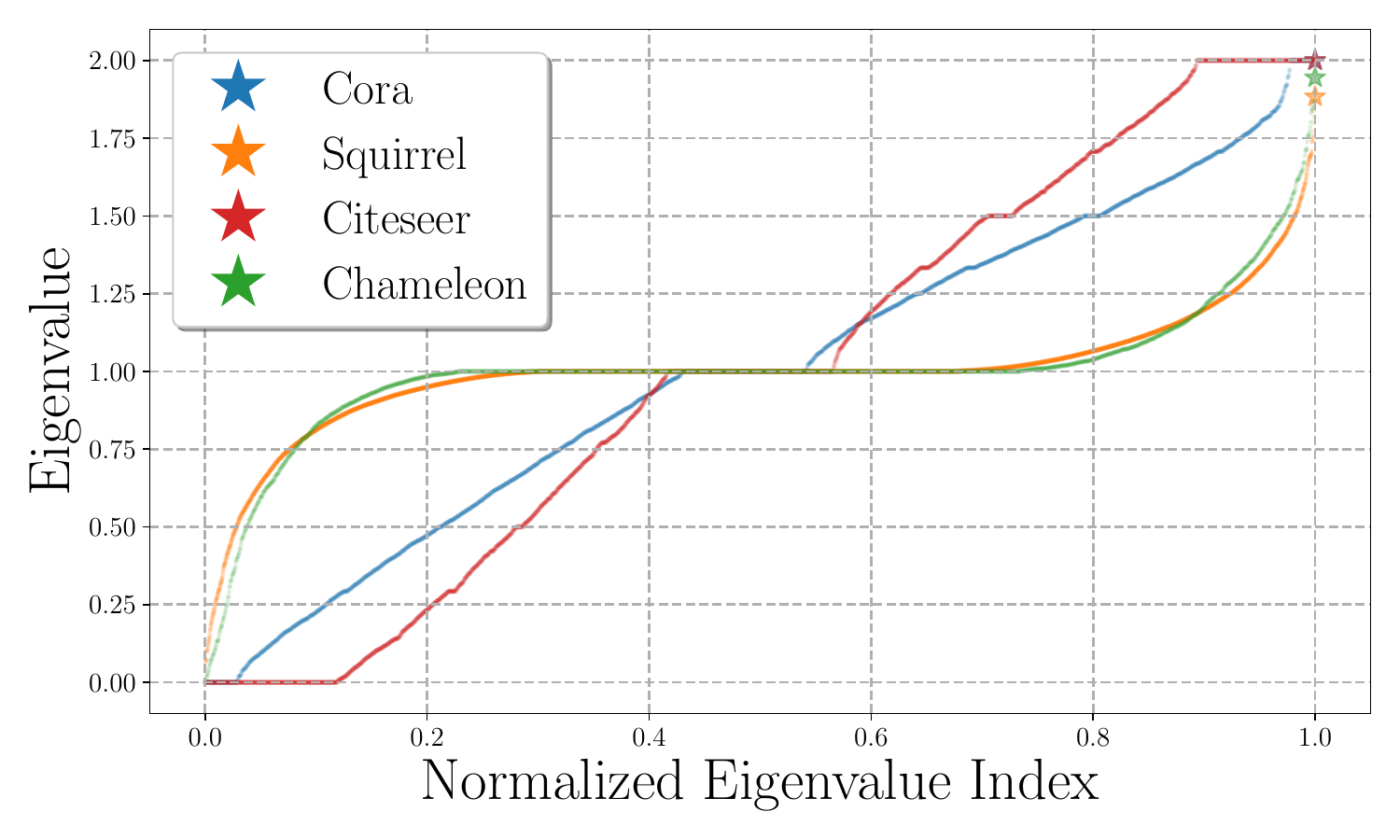}
\caption{Sorted eigenvalues of the normalised Laplacian $L$ for Cora, Squirrel, Citeseer, and Chameleon. The x-axis shows the normalised index of the eigenvalues.}
\label{fig:appendix_eigenvalues}
\end{figure}

Next, we provide additional plots of the spectral properties of real-world datasets and their influence on the generalization behavior of GCNs. Specifically, we analyse the eigenvalues of the Laplacian and visualise the theoretical error of GCN as a function of the homophily parameter \(q\) under different noise-to-label ratios \(\frac{{n\sigma^2}}{{n_{{\text{train}}}}}\). Figure \ref{fig:appendix_eigenvalues} shows that the graphs Cora, Citeseer, Squirrel and Chameleon all have approximately symmetric spectrum. The largest eigenvalue of the normalised Laplacian is exactly 2 for Cora and Citeseer, while it is strictly less than 2 for Squirrel and Chameleon. As stated in Remark~\ref{Cor:Derivative}, Figure \ref{fig:gcn-risk-appendix} shows that the GCN error on Cora (where \(\lambda_{\max} = 2\)) decreases with increasing \(q\). On Squirrel (where \(\lambda_{\text{max}} < 2\)), the behavior depends on the value of \(c\): for large values, the error decreases with \(q\); for small values, it increases; and for intermediate values, the trend is non-monotonic.

\begin{figure}[]
\centering
\includegraphics[width=\linewidth]{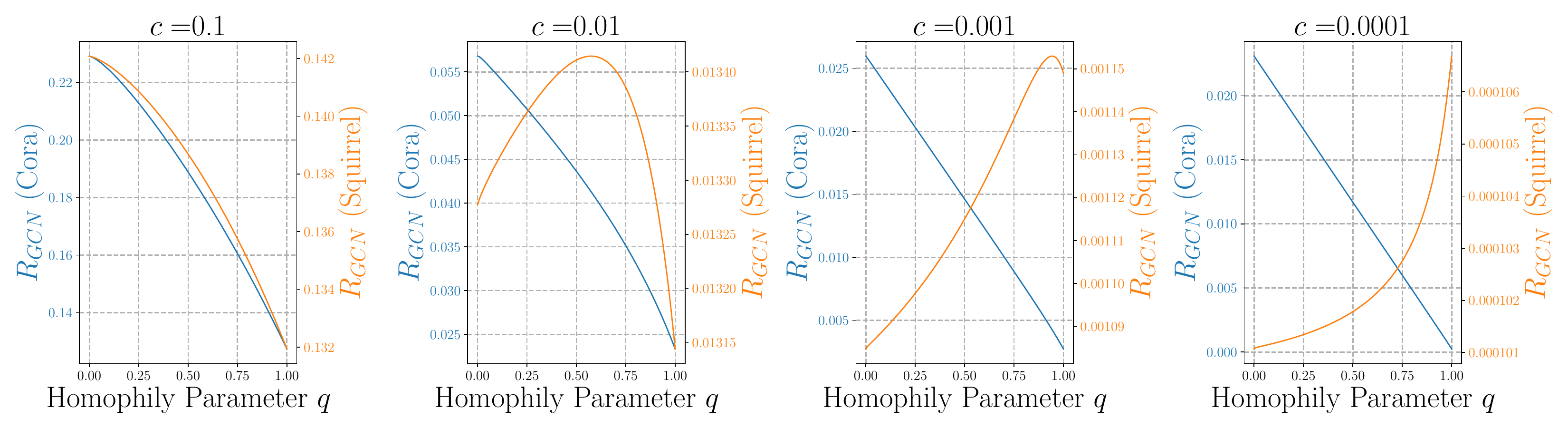}
\caption{GCN error curves for Cora and Squirrel, averaged over eigenvalues, plotted against the homophily parameter \( q \), under various noise-to-label ratios \( c \in \{1, 0.1, 0.01, 0.001\} \).}
\label{fig:gcn-risk-appendix}
\end{figure}

\section{Oversmoothing from a Filtering Perspective}
\label{Ap:Oversmoothing}
This section explores the role of depth and skip connections from a filtering perspective. The previous sections focus on \(l=1\). However, the presented theory can be extended to models with multiple layers and those incorporating skip connections. 
When considering deeper models, we observe a natural trend in the frequency response: as the number of layers increases, the filter becomes smoother. This smoothing effect indicates the oversmoothing phenomenon \citep{10.5555/3504035.3504468} commonly encountered in deeper GNNs, where information from distant nodes is excessively averaged, potentially leading to a loss of local structure.
To gain insight into how the network's depth affects its filtering behaviour, we examine the frequency response of GCN with increasing layer depth and the inclusion of skip connections. We see that as the number of layers increases, the filters suppress more high-frequency components of the graph signal (Figure \ref{fig:freq_response_gcn} Left). Skip connections, as expected, mitigate the oversmoothing effect by allowing direct connections between the input and output of each layer (Figure \ref{fig:freq_response_gcn} Right). 

\begin{figure}[]
\centering
\includegraphics[width=0.8\textwidth]{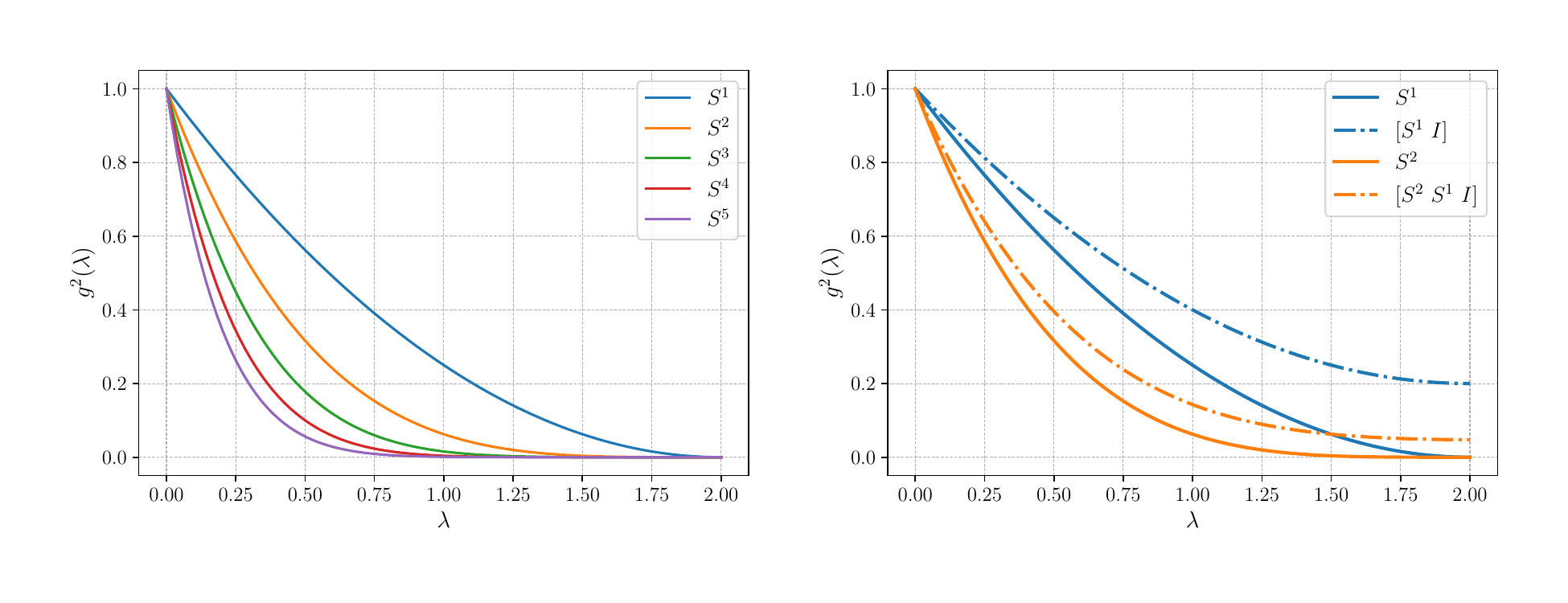}
\caption{Frequency response of Graph Convolutional Networks (GCNs) with varying depth and the effect of skip connections. \textbf{Left:} Frequency response for increasing number of layers, illustrating how deeper networks oversmooth by suppressing high-frequency components. \textbf{Right:} Frequency response of GCNs with skip connections, which mitigates oversmoothing.}
\label{fig:freq_response_gcn}
\end{figure}

\section{Practical Implications of the Theoretical Results}
The primary focus of our work is to provide an exact characterisation of the generalisation error of GNNs. This analysis reveals several insights, one significant implication being the presence of benchmark biases. 

While creating better benchmarks is outside the scope of this paper, our framework offers concrete tools to assess the interaction between graph structure and features. Beyond diagnosing biases, our exact generalisation error analysis provides principled criteria for model selection.

\begin{itemize}
    \item \textbf{Bias 1: Alignment Between Features and Graph Structure}  The generalisation error reveals this bias through the misalignment score (Lemma 3.1), which quantifies the extent to which the target space is not represented in the GNN embedding and can be computed for any dataset and model, and provides valuable information for model selection. It shows models like GCN are susceptible to this alignment, and that datasets with low alignment may inherently disadvantage such models. Figure 2 shows that when misalignment is high, even a simple concatenation (also introduced in \citep{chen2022demystifyinggraphconvolutionsimple}) outperforms GCN. It also reveals that most benchmark datasets (Cora, Citeseer, Chameleon, Squirrel) exhibit low misalignment and are implicitly favourable to GCN. In contrast, the Wikipedia dataset \citep{DBLP:journals/corr/abs-1905-12921}—which is less commonly used in the literature—shows high misalignment. As a result, the proposed misalignment score serves as a diagnostic tool to assess whether a dataset inherently favours multiplicative models, such as GCN and can be used to validate both existing and newly proposed benchmark datasets. Recognising this bias is valuable in itself, as it allows researchers to make informed choices when evaluating or designing models. Practitioners developing new models can use this score to pick an appropriate model for the dataset. For new benchmarks, this score provides a better understanding. This complements recent calls in the literature \citep{DBLP:journals/corr/abs-2502-14546,DBLP:journals/corr/abs-2502-02379} by making this benchmark bias measurable and interpretable within a principled framework.
    \item \textbf{Bias 2: Homophily Versus Heterophily}:
    Several prior works have already identified the bias of existing benchmarks to homophilic datasets \citep{DBLP:journals/corr/abs-2104-01404}.
    Our theory also provides a formal understanding of how GNN performance is influenced by the spectral characteristics of the graph and features, such as homophily and heterophily (Section 3B). By analysing the frequency response and how the generalisation error varies with the homophily parameter, our theory provides concrete guidance on which GNN architectures are better suited for different types of graphs. This relationship is detailed in Figure~\ref{fig:freq_response}. Our analysis also reveals an intriguing behaviour of GCN under extreme heterophily: when the graph exhibits spectral symmetry, the generalisation error of GCN can decrease with increasing heterophily, depending on the maximum eigenvalue of the Laplacian (see Remark 3.2, Fig. 3, Fig. 4). This allows practitioners to evaluate the expected performance of a GNN on a specific dataset based on its spectral profile, even before training. Furthermore, new architectures that fall within our signal processing framework can also be evaluated theoretically.
\end{itemize}

These can guide practitioners in evaluating the suitability of GNN architectures and in interpreting empirical performance more reliably.

\end{document}